\let\proof\@undefined
\let\endproof\@undefined
\renewcommand{\paragraph}[1]{\emph{#1:} }
\newtheorem{theorem}{Theorem}[section]
\newtheorem{proposition}[theorem]{Proposition}
\newtheorem{corollary}[theorem]{Corollary}
\newtheorem{lemma}[theorem]{Lemma}
\theoremstyle{definition}
\newtheorem{definition}[theorem]{Definition}
\newtheorem{problem}[theorem]{Problem}
\theoremstyle{remark}
\newtheorem{remark}[theorem]{Remark}
\newcommand{\tuple}[1]{\ensuremath{\left\langle #1 \right\rangle}\xspace}
\newcommand{\RR}{\ensuremath{\mathbb{R}}\xspace}
\newcommand{\ZZ}{\ensuremath{\mathbb{Z}}\xspace}
\newcommand\oprocendsymbol{\hbox{$\square$}}
\newcommand\oprocend{\relax\ifmmode\else\unskip\hfill\fi\oprocendsymbol}
\newcommand{\cnph}{\ensuremath{\text{\textbf{NP}-hard}}\xspace}
\newcommand{\mst}{\ensuremath{\text{M{\scriptsize ST}}}\xspace}
\newcommand{\tsp}{\ensuremath{\text{T{\scriptsize SP}}}\xspace}
\newcommand{\ctsp}{\ensuremath{\text{CT{\scriptsize SP}}}\xspace}
\newcommand{\nestedCtsp}{\ensuremath{\text{CT{\scriptsize SP}}^*}\xspace}
\newcommand{\gtsp}{\ensuremath{\text{G{\scriptsize TSP}}}\xspace}
\newcommand{\tsplib}  {\textsc{TspLib}\xspace}
\newcommand{\gammaCluster}{\ensuremath{\Gamma}-Cluster\xspace}
\newcommand{\gammaClustering}{\ensuremath{\Gamma}-Clustering\xspace}
\title{
  \Large \bf Gamma Clustering for Path Planning:
  A New, Efficient Clustering Approach Geared Towards Preserving Minimal Length Solution Paths
  }
  \title{\Large \bf Clustering in Discrete Path Planning for Approximating Minimum Length Paths}
\author{
  Frank Imeson \qquad Stephen L. Smith
  \thanks{ This research is partially supported by the Natural Sciences and Engineering Research Council of Canada (NSERC). }
  \thanks{ The authors are with the Department of Electrical and Computer Engineering, University of Waterloo, Waterloo ON, N2L 3G1 Canada  (\frank; \smith) }
}
\date{}
\begin{document}
\maketitle

%
\begin{abstract}
In this paper we consider discrete robot path planning problems on metric graphs.  We propose a clustering method, \gammaClustering for the planning graph that significantly reduces the number of feasible solutions, yet retains a solution within a constant factor of the optimal.  By increasing the input parameter $\Gamma$, the constant factor can be decreased, but with less reduction in the search space.  We provide a simple polynomial-time algorithm for finding optimal \gammaCluster{s}, and show that for a given $\Gamma$, this optimal is unique. We demonstrate the effectiveness of the clustering method on traveling salesman instances, showing that for many instances we obtain significant reductions in computation time with little to no reduction in solution quality.
\end{abstract}

%
\section{Introduction}
\label{sec:intro}
Discrete path planning is at the root of many robotic applications, from surveillance and monitoring for security, to pickup and delivery problems in automated warehouses. In such problems the environment is described as a graph, and the goal is to find a path in the graph that completes the task and minimizes the cost function.  For example, in monitoring, a common problem is to compute a tour of the graph that visits all vertices and has minimum length~\cite{smith2011optimal}. These discrete planning problems are typically \cnph~\cite{smith2011optimal, gouveia2015load}, and thus there is a fundamental trade off between solution quality and computation time. In this paper we propose a graph clustering method, called \gammaClustering, that can be used to reduce the space of feasible solutions considered during the optimization.  The parameter $\Gamma$ serves to trade-off the feasible solution space reduction (and typically computation time) with the quality of the resulting solution.

The idea behind \gammaClustering is to group vertices together that are in close proximity to each other but are also far from all other vertices. Figure~\ref{fig:example} shows an example of \gammaClustering in an office environment. Given this clustering, we solve the path planning problem by restricting the path to visit vertices within each cluster consecutively (i.e., no path can visit any cluster more than once). This restriction reduces the number of possible solutions exponentially and thus reduces the amount of computational time needed to find good solutions.

Unlike other clustering methods, \gammaClustering does not accept as input a desired number of clusters.  This means that some instances will have no clusters, while others will have many. In this way, \gammaCluster{s} only explore natural structures within the problem instances instead of imposing structures onto the instance.   Additionally, when the graph is metric, we establish that for a given \gammaClustering, the optimal path of the clustered planning problem is within a constant factor (dependent on $\Gamma$) of the true optimal solution.

\begin{figure}
  \centering
  \includegraphics[height=149pt, width=200pt]{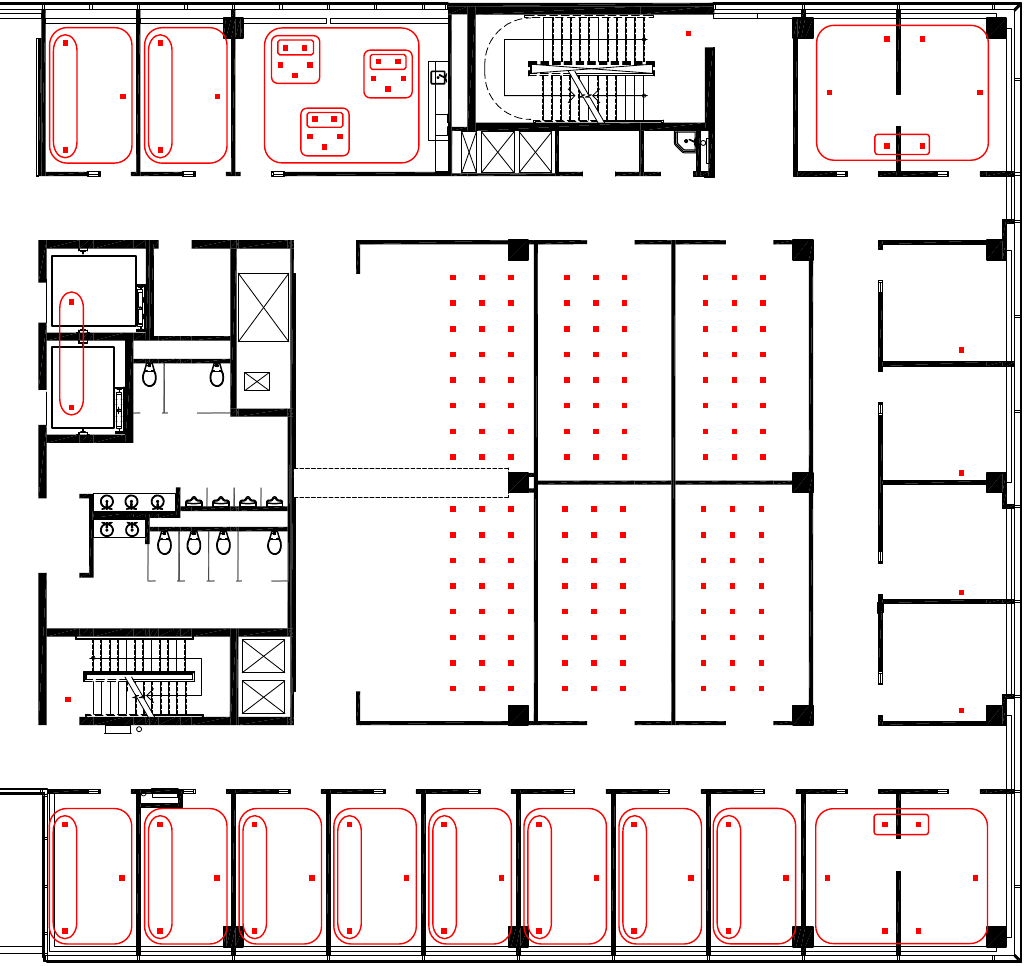}
  \caption{
    The results of \gammaClustering used on an office environment. The red dots
    represent locations of interest and the red rectangles show clusters of size
    two or greater.
  }
  \label{fig:example}
\end{figure}

\paragraph{Related work} 
There are a number of clustering methods for Euclidean~\cite{jain1988algorithms, karypis1999chameleon} and discrete~\cite{kernighan1970efficient, guha2000rock, katselis2016clustering} environments. Typically the objective of these algorithms is to find a set of equal (or roughly equal) non-overlapping clusters that are grouped by similarity (close in proximity, little to no outgoing edges, etc \ldots), where each location in the graph is assigned to one cluster. For these methods, the desired number of clusters is given as an input parameter. In contrast, in \gammaClustering, the idea is to simply find a specific form of clustering within the environment, if it exists. These \gammaCluster{s} may be nested within one another.

There are other clustering methods that also look for specific structures within the graph such as community structures~\cite{blondel2008fast}, which is based on a metric that captures the density of links within communities to that between communities. In contrast, our clustering method is specifically designed to find structures that yield desirable properties for path planning on road maps.

The use of clustering to save on processing/time is done in a variety of different fields such as data mining~\cite{berkhin2006survey}, parallel computer processing~\cite{meyerhenke2009new}, image processing~\cite{gao2010kernel}, and control~\cite{jin2012multi} for path planning. In environments that have regions with a high degree of connectivity, such as electronic circuits, clustering is commonly used to identify these regions and then plan (nearly) independently in each cluster~\cite{karypis1998fast}. For path planning problems with repetitive tasks, one can cluster a set of popular robot action sequences into macros~\cite{backstrom2014automaton, levihn2013foresight}, allowing the solver to quickly discover solutions that benefit from these action sequences. In applications such as sensor sweeping for coverage problems~\cite{das2014mapping} or in the routing of multiple agents~\cite{ryan2008exploiting}, clustering has been used to partition the environment into regions that can again be treated in a nearly independent manner, reducing computation time.

There is some prior work on partitioning in discrete path planning. Multilevel refinement~\cite{chevalier2009comparison} is the process of recursively coarsening the graph by aggregating the vertices together to create smaller and smaller instances, for which a plan can be found more easily. The plan is then recursively refined to obtain a solution to the original problem. The idea in coarsening a graph is that the new coarse edges should approximate the transition costs in the original graph.  This differs from \gammaClustering, which preserves the edges within the graph. There are a number of clustering approaches that aim to reduced the complexity of Euclidean and or planar \tsp problems~\cite{karp1977probabilistic, haxhimusa2009approximative}. \gammaClustering is more general, in that it works on any graph, while the solution quality guarantees only hold for metric graphs.

\paragraph{Contributions} The main contribution of this paper is the introduction of \gammaClustering, a clustering method for a class of discrete path planning problems.  We establish that the solution to the corresponding clustered problem provides a $\min\left(2, 1 + \frac{3}{2\Gamma}\right)$-factor approximation to the optimal solution. We give some insight into the reduction of the search space as a function of the amount of clustering, and we provide an efficient algorithm for computing the optimal \gammaClustering. We then use an integer programming formulation of the \tsp problem to demonstrate that for many problem instances the clustering method reduces the computation time while still finding near-optimal solutions.

%
\section{Preliminaries in Discrete Path Planning}
\label{sec:background}
In this section we define the class of problems considered in this paper, review some semantics of clusters, review the traveling salesman problem (\tsp)~\cite{applegate2006traveling} and define its clustered variant, the general clustered traveling salesman problem (\nestedCtsp).

%
\subsection{Discrete Path Planning}
Given a graph $G = \tuple{V,E,w}$, we define a \emph{path} as a non-repeating sequence of vertices in $V$, connected by edges in $E$. A \emph{cycle} is a path in which the first and last vertex are equal, and for simplicity we will also refer to cycles as paths. Let $\mathcal{P}$ represent the set of all possible paths in $G$. Then, abstractly, a path planning constraint defines a subset $\mathcal{P}_1 \subseteq \mathcal{P}$ of feasible paths. Given a set of constraints $\mathcal{P}_1, \mathcal{P}_2, \ldots, \mathcal{P}_m$, the set of all feasible paths is $\mathcal{P}_1 \cap \mathcal{P}_2 \cap \cdots \cap \mathcal{P}_m$. In this paper we restrict our attention to the following class of constraints and planning problems.

\begin{definition}[Order-Free Constraints]
	A constraint $\mathcal{P}_1$ is \emph{order-free} if, given any $p\in
	\mathcal{P}_1$, then all paths obtained by permuting the vertices of $p$ are
	also in $\mathcal{P}_1$.
\end{definition}

\begin{problem}[Discrete Path Planning Problem]
  Given a complete weighted graph $G = \tuple{V,E,w}$ and a set of order-free constraints $\{
  \mathcal{P}_1, \mathcal{P}_2, \ldots, \mathcal{P}_m \}$, find the minimum
  length feasible path.
\end{problem}

Many discrete path planning problems for single and multiple robots fall into this class, so long as they do not restrict the ordering of vertex visits (i.e., no constraints of the form ``visit $A$ before $B$''). Some examples include single and multi-robot traveling salesman problems, point-to-point planning, and patrolling.  As a specific example the \gtsp is a problem where a robot is required to visit exactly one location in each non-overlapping set of locations~\cite{noon1993efficient}. This is naturally expressed in the above framework by having one constraint for each set: for each cluster $V_i$ we have a constraint stating that exactly one vertex in $V_i$ must be visited in the path. 

A \emph{metric} discrete path planning problem is one where the edge weights in the graph $G$ satisfy the triangle inequality: for $v_a,v_b,v_c\in V$, we have $w(v_a,v_c) \leq w(v_a, v_b) + w(v_b, v_c)$.

To describe the number of feasible paths for a given planning problem, we use the phrase \emph{search space size}. For example, a problem where we must choose an ordering of $n$ locations has a search space size of $n!$, since there are $n!$ combinations that a path may take. Note that as more constraints are added to the problem, the search space size can only be reduced, since a feasible path must lie in the intersection of all constraints.

%
\subsection{Clusters}
A \emph{cluster} is a subset of the graph's vertices, $V_i \subset V$. Given the clusters $V_1$ and $V_2$, we say $V_1$ is \emph{nested} in $V_2$ if $V_1 \subseteq V_2$. The clusters $V_1$ and $V_2$ are \emph{overlapping} if $V_1 \cap V_2 \neq \emptyset$, $V_1 \nsubseteq V_2$, and $V_2 \nsubseteq V_1$. A \emph{set of clusters} (or clustering) is denoted by $C = \{V_1,\ldots,V_m\}$. A clustering $C = \{V_1, V_2, \ldots, V_m \}$ is a nested if there exists some $V_i \subseteq V_j$ for $V_i, V_j \in C$.

In this paper we seek to add clustering constraints to a discrete planning problem that reduce its search space size, but also retains low-cost feasible paths. The clustering constraints we consider are of the following form.

\begin{definition}[Consecutive Visit Constraint]
Given a graph $G  = \tuple{V,E,w}$ and a cluster $V_i \subseteq V$, a feasible path $p$ must visit the vertices within the cluster $V_i$ \emph{consecutively}. Formally, the vertices visited by $p$, $V[p]$ are visited consecutively if there exists a path segment $p'$ of $p$ that visits every vertex in $V_i \cap V[p]$ and is of length $\left|V_i \cap V[p]\right|$.
\end{definition}

Note that in the above definition, it is not necessary for all of the vertices in $V_i$ to be visited. It is just necessary to visit the vertices consecutively in $V_i$ that are visited.

%
\subsection{Traveling Salesman Problems}
The traveling salesman problem (\tsp) is defined as follows: given a set of cities and distances between each pair of cities, find the shortest path that the salesman can take to visit each city exactly once and return to the first city (i.e., the shortest tour). A tour in a graph that visits each vertex exactly once is called a Hamiltonian cycle (regardless of path cost). The general clustered version of \tsp is the extension that requires the solution to visit the vertices within the clusters consecutively. The definition of these problems is as follows:

\begin{problem}[Traveling Salesman Problem (\tsp)]
Given a complete graph $G = \tuple{V,E,w}$ with edge weights $w: E \rightarrow \RR_{\geq 0}$, find a Hamiltonian cycle of $G$ with minimum cost.  
\end{problem}

\begin{problem}[General-\ctsp]
  Given a complete and weighted graph $G = \tuple{V,E,w}$ along with a
  clustering $C = \{V_1,\ldots, V_m\}$, find a Hamiltonian cycle of $G$ with
  minimum cost such that the vertices within each cluster $V_i$ are visited
  consecutively.
\end{problem}

The traditional version of the \ctsp restricts the clusters to be non-overlapping (and non-nested). For this paper we use the syntax \nestedCtsp to emphasize when we are solving a General-\ctsp problem \ctsp to refer to the traditional problem.

%
\section{\gammaClustering}
\label{sec:theory}
In this section, we define \gammaClustering and show that the \gammaCluster{ed} path planning problem provides a $\min\left(2, 1 + \frac{3}{2\Gamma}\right)$ approximation of the original path planning problem. We then describe an algorithm for finding the optimal \gammaClustering, and characterize the search space reductions.

%
\subsection{Definition of \gammaClustering}
Below we define the notion of \gammaCluster{s}, \gammaClustering{s}, and the clustered discrete path planning problem. Then we pose the clustering problem as one of maximizing the search space reduction.

\begin{definition}[$\Gamma$-Metric of a cluster]
  \label{def:gamma_cluster}
  Given a graph $G = \tuple{V,E,w}$ and a cluster $V_i \subset V$, we define
  the following quantities for $V_i$ relative to $G$:
  \begin{align*}
    \alpha_i &\equiv \min_{v_a \in V_i, v_b \in V \setminus V_i} (w(v_a, v_b), w(v_b, v_a)) \\
    \beta_i &\equiv \max_{v_a,v_b \in V_i, v_a \neq v_b} w(v_a, v_b) \\
    \Gamma_i &\equiv \frac{\alpha_i}{\beta_i},
  \end{align*}
  where $\alpha_i$ represents the minimum weight edge entering or exiting the
  cluster $V_i$, and $\beta_i$ represents the maximum weight edge within $V_i$. The
  ratio $\Gamma_i$ is a measure of how separated the vertices in $V_i$ are from
  the remaining vertices in $G$.
\end{definition}

\begin{definition}[\gammaClustering]
  \label{def:gamma_clusters}
  Given an input parameter $\Gamma \geq 0$ and a graph $G = \tuple{V,E,w}$, a
  clustering $C = \{V_1, V_2,\ldots,V_m\}$ is said to be a \gammaClustering if
  and only if $V$ is covered by $V_1 \cup V_2 \cup \cdots \cup V_m$; each $V_i
  \in C$ has a separation $\Gamma_i \geq \Gamma$; and the clusters are
  either nested ($V_i \subseteq V_j$ or $V_j \subseteq V_i$ for all $V_i, V_j \in C$)
  or non-overlapping ($V_i \cap V_j = \emptyset$).
\end{definition}

The search space reduction for path planning problems comes from restricting paths to visit the clusters consecutively and our goal is to maximize that reduction. Thus we are interested in the following two problems.

\begin{definition}[The Clustered Path Planning Problem]
  \label{def:path_planning_problem}
  Given a discrete path planning problem $P$ and a clustering
  $C$, the \emph{clustered version of the problem} $P'$ has
  the constraint that the path must visit the vertices within each
  cluster consecutively, in addition to all the constraints of $P$.
\end{definition}

\begin{definition}[The Clustering Problem]
  \label{def:clustering_problem}
  Given a graph $G = \tuple{V,E,w}$ and a parameter $\Gamma > 0$, find a
  \gammaClustering $C^*$ such that the search space reduction is maximized.
\end{definition}

\begin{remark}[Overlap]
Note that in Definition~\ref{def:gamma_clusters} overlapping clusters are not permitted. This is necessary for the problem in Definition~\ref{def:path_planning_problem} to be well defined. In addition, we will see in the following section that clusters that have a separation of $\Gamma_i > 1$ cannot overlap.
\end{remark}

%
\subsection{Solution Quality Bounds}
In this section, we show that when the graph $G$ is metric and $\Gamma > 1$, then the solution to the $\Gamma$-clustered path planning problem provides a $\min\left(2, \left(1 + \frac{3}{2\Gamma}\right)\right)$-factor approximation to the optimal.

%
\begin{theorem}[Approximation Factor]
  \label{thrm:approximation_factor}
  Given a metric discrete path planning problem $P$ with optimal solution $p^*$
  and cost $c^*$, a \gammaClustering $C = \{V_1, V_2, \ldots, V_m \}$ where
  $\Gamma > 1$, then the optimal solution $(p')^*$ to the clustered problem $P'$
  over the same set of vertices is a solution to $P$ with cost $(c')^* \leq
  \min\left(2, 1 + \frac{3}{2\Gamma}\right) c^*$.
\end{theorem}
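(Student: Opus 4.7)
The plan is constructive: from the optimal unclustered path $p^*$ I will build a path $p_{\mathrm{mod}}$ that is feasible for the clustered problem $P'$, with cost at most $\min\bigl(2,\, 1 + \tfrac{3}{2\Gamma}\bigr)c^*$. Since $(p')^*$ is optimal for $P'$, this immediately gives $(c')^* \le \mathrm{cost}(p_{\mathrm{mod}})$. The path $p_{\mathrm{mod}}$ will be built as a permutation of $p^*$, so it inherits feasibility for every order-free constraint of $P$ (these depend only on the vertex multiset), and I will impose the consecutive-visit constraints by construction.

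I process the clusters of $C$ from innermost to outermost; this order is well defined because $C$ is laminar (any two clusters are either nested or disjoint). When it is $V_i$'s turn, every sub-cluster of $V_i$ has already been made consecutive. If $V_i$'s vertices already appear as one block, nothing to do. Otherwise let $S_1,\ldots,S_{n_i}$ be the $n_i \ge 2$ maximal $V_i$-segments of the current path, covering $N_i$ vertices in total. The modification replaces $S_1$ by a single Hamiltonian path $H$ through $\bigcup_j V[S_j]$, obtained by chaining the existing $S_j$'s with bridging internal edges; at each of the $n_i - 1$ excised positions, the outside path is short-cut by a direct edge between the vertex that used to enter $S_j$ and the vertex that used to exit it. Treating each already-consecutive sub-cluster of $V_i$ as a super-vertex during the chaining keeps all inner consecutive-visit constraints satisfied.

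The key accounting step invokes the triangle inequality twice. Each bridging edge inside $V_i$ has weight at most $\beta_i$; each outside short-cut is at most $\beta_i$ above the sum of the two boundary edges it replaces, by routing through the two endpoints of the excised $S_j$. Summing over the $n_i - 1$ excisions shows that the outside-path cost rises by at most $(n_i - 1)\beta_i$ beyond the removed boundary edges, and the length of $H$ exceeds the summed length of the original $S_j$'s by at most another $(n_i - 1)\beta_i$. Hence the extra cost attributable to processing $V_i$ is bounded by a small constant times $(n_i - 1)\beta_i$. Against this I use the lower bound $c^* \ge \sum_i 2 n_i \alpha_i \ge 2\Gamma \sum_i n_i \beta_i$, obtained by counting the $2n_i$ boundary edges of each cluster and applying $\alpha_i \ge \Gamma \beta_i$; taking ratios yields $1 + \tfrac{3(n_i-1)}{2 n_i \Gamma} \le 1 + \tfrac{3}{2\Gamma}$.

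For the complementary $2$-factor bound, which is only tighter than $1 + 3/(2\Gamma)$ when $\Gamma < 3/2$, I would bound $H$ directly by an MST-doubling argument within each cluster and compare with the $p^*$-edges incident to or inside $V_i$, giving $\mathrm{cost}(p_{\mathrm{mod}}) \le 2 c^*$ unconditionally in $\Gamma$. The main obstacle I foresee is the bookkeeping for nested clusters: an edge that is a boundary edge of an inner cluster can also be an internal edge of an enclosing cluster, so in the lower bound $c^* \ge \sum_i 2 n_i \alpha_i$ one must charge each edge only to its innermost enclosing cluster to avoid double-counting, and one must verify that the per-cluster modifications remain valid after all earlier (inner) modifications have been applied.
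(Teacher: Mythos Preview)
Your constructive strategy---deform $p^*$ cluster by cluster into a clustered path and bound the extra cost---is exactly the paper's strategy, and your per-cluster estimate (roughly $(2n_i-1)\beta_i$ in the paper's accounting) is essentially Lemma~3.5. The gap is in the global step. The inequality $c^* \ge \sum_i 2 n_i \alpha_i$ is false in general: a single edge of $p^*$ can simultaneously be a boundary edge of several clusters (for nested clusters $V_1\subset V_2$ an edge leaving $V_2$ also leaves $V_1$, and for disjoint clusters an edge from $V_i$ to $V_j$ is a boundary edge of both), so summing $2n_i\alpha_i$ over $i$ can exceed $c^*$ by an arbitrary factor. Charging to ``innermost enclosing cluster'' does not fix the disjoint case, and even in the nested case it leaves the outer clusters without enough budget. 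Your per-cluster ratio $3(n_i-1)/(2n_i\Gamma)$ therefore does not aggregate into a global ratio.

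The paper's device to avoid this is a modified graph $\hat G$ in which every inter-set edge $(v_a,v_b)$ with $v_a\in V_i$, $v_b\in V_j$ has its weight increased by $\tfrac{3}{2}\max(\beta_i,\beta_j)$. Two things then happen cleanly: (i) each single-cluster deformation has nonpositive cost in $\hat G$, because the $2(n+1)$ inter-set edges you remove each carried at least $\tfrac{3}{2}\beta_i$ of padding, which pays for the $(2n+1)\beta_i$ deformation cost plus the padding on the two remaining inter-set edges; and (ii) the total padding on $p^*$ is bounded \emph{edge by edge}, since $\tfrac{3}{2}\max(\beta_i,\beta_j)\le \tfrac{3}{2\Gamma}\max(\alpha_i,\alpha_j)\le \tfrac{3}{2\Gamma}\,w(v_a,v_b)$. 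The ``max'' is the whole point: it lets one edge serve every cluster whose boundary it crosses without any summation over clusters. Your argument is missing precisely this amortization.

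For the $2$-factor half, the paper does not do a per-cluster MST as you sketch; it takes a single global MST on $V[p^*]$ and proves (using $\Gamma>1$) that any MST has exactly one inter-set edge per cluster, so the standard double-MST Eulerian shortcut tour is automatically feasible for $P'$ while inheriting the usual $2c^*$ bound. Your per-cluster MST idea would need a separate comparison between each intra-cluster MST and the $p^*$-edges touching that cluster, which again runs into the same shared-edge accounting problem.
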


To prove the main result, we begin by proving the first half of the bound $(c')^* \leq 2c^*$. We do this by using an minimum spanning tree (MST) to construct a feasible solution for $P'$.

\begin{lemma}[Minimum Spanning Tree (MST)]
  \label{thrm:mst_edges}
  Given a metric graph $G$ and a \gammaClustering $C$, with $\Gamma > 1$, every
  MST will have exactly one inter-set edge for each cluster $V_i \in C$.
\end{lemma}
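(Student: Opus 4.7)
The approach is a one-step exchange argument on an arbitrary MST $T$, hinging on the strict separation $\alpha_i > \beta_i$ that $\Gamma_i \geq \Gamma > 1$ forces. The central structural fact to establish is that, for every cluster $V_i$, the induced subgraph $T[V_i]$ is connected; the count of inter-set edges per cluster then follows by standard tree-edge bookkeeping.

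I would argue $T[V_i]$ is connected by contradiction. Suppose instead that $T[V_i]$ splits into at least two components, and pick $a_1, a_2 \in V_i$ lying in different components. The unique $T$-path $\pi$ from $a_1$ to $a_2$ must leave $V_i$, so $\pi$ contains some inter-set edge $e$ with $w(e) \geq \alpha_i$. Because $G$ is complete and $a_1, a_2 \in V_i$, the direct edge $(a_1, a_2)$ is an intra-cluster edge with $w(a_1,a_2) \leq \beta_i$ directly from the definition of $\beta_i$; combining with $\Gamma_i > 1$ yields $w(a_1, a_2) \leq \beta_i < \alpha_i \leq w(e)$. Adding $(a_1, a_2)$ to $T$ creates a unique cycle through $e$; deleting $e$ from that cycle produces a spanning tree of strictly smaller total weight, contradicting the minimality of $T$. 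Hence $T[V_i]$ is connected and uses exactly $|V_i|-1$ intra-cluster edges of $V_i$.

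Applying the same argument to each remaining cluster of $C$ (which together cover $V \setminus V_i$) accounts for all the internal edges of $T$, and a simple count against the total of $|V|-1$ tree edges pins down the number of inter-set edges incident to $V_i$ to be exactly one. The only real technical step is the swap: the inequality $w(a_1,a_2) < w(e)$ depends precisely on the gap $\alpha_i > \beta_i$ coming from $\Gamma > 1$, so the hypothesis is essential and the exchange would fail to strictly decrease weight without it. Notably, metricity plays no role in the swap itself (the bound $w(a_1,a_2) \leq \beta_i$ is purely definitional), so it is $\Gamma > 1$, and not the triangle inequality, that drives the lemma; metricity will only come into play in the approximation theorem that follows.
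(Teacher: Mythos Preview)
Your exchange argument is exactly the paper's: assume $T[V_i]$ is disconnected, add the cheap intra-cluster edge between two components, delete the expensive inter-set edge on the resulting cycle, and contradict minimality. This correctly establishes that $T[V_i]$ is connected for every cluster.

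The gap is in your last paragraph: connectivity of every $T[V_j]$ does \emph{not} force each cluster to have exactly one inter-set edge, and no edge count will get you there. Take $V=\{1,\dots,6\}$ with the flat clustering $V_1=\{1,2\}$, $V_2=\{3,4\}$, $V_3=\{5,6\}$, intra-cluster weight $1$, inter-cluster weight $10$ (metric, with $\Gamma=10$). The path $1\text{--}2\text{--}3\text{--}4\text{--}5\text{--}6$ is an MST with every $T[V_j]$ connected, yet $V_2$ is incident to two inter-set tree edges. More generally, a flat partition into $m$ clusters yields $m-1$ inter-cluster tree edges and hence $2(m-1)$ inter-set incidences spread over $m$ clusters, which cannot all equal $1$ once $m\geq 3$; your count also silently assumes the remaining clusters partition $V\setminus V_i$, which fails under nesting. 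The paper's own proof has the mirror-image slip (its ``thus'' from ``$\geq 2$ inter-set edges'' to ``$T[V_i]$ disconnected'' is the same false implication read backwards), so the lemma as literally phrased is too strong. What both arguments genuinely prove---and what the downstream two-factor argument actually needs---is that each $T[V_i]$ is a connected subtree; the ``exactly one'' wording is an overstatement you should not try to rescue by counting.
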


\begin{proof}
  We prove the above result by contradiction. Suppose the MST has at least two
  inter-set edges connected to $V_i$. Thus, there are at least two sets of
  vertices in $V_i$ that are not connected to each other using intra-set edges.
  We can then lower the cost of the MST by removing one of these inter-set edges
  of weight $\geq \alpha_i$ and replace it with an intra-set edge of weight
  $\leq \beta_i < \alpha_i$. This highlights the contradiction and thus every
  MST will have exactly one inter-set edge for each cluster $V_i \in C$.
\end{proof}

%
\begin{lemma}[Two-Factor Approximation]
  \label{thrm:2_factor_approximation}
  Consider a metric discrete path planning problem $P$ with an optimal solution
  path $p^*$ and cost $c^*$. Then given a \gammaClustering $C = \{V_1, V_2,
  \ldots, V_m \}$ with $\Gamma > 1$, the optimal solution $(p')^*$ for the
  clustered problem $P'$ over the same set of vertices $V[p^*]$ is a solution to
  $P$ with cost $(c')^* \leq 2 c^*$.
\end{lemma}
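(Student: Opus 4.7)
The plan is to exhibit an explicit feasible solution to $P'$ on $V[p^*]$ whose cost does not exceed $2c^*$; since $(p')^*$ is optimal for $P'$ on this vertex set, this immediately gives $(c')^* \leq 2c^*$. The construction is the classical MST-doubling $2$-approximation for metric \tsp, with the cluster consecutive-visit constraints enforced by exploiting the pendant-subtree structure guaranteed by Lemma~\ref{thrm:mst_edges}.

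First I would build a minimum spanning tree $T$ on the subgraph of $G$ induced by $V[p^*]$. Because $p^*$ spans $V[p^*]$, deleting any single edge of $p^*$ leaves a spanning tree of $V[p^*]$, so $w(T) \leq c^*$. Doubling every edge of $T$ yields an Eulerian multigraph whose Euler tour is a closed walk of weight $2w(T)$. Shortcutting this walk via the triangle inequality gives a Hamiltonian cycle $p'$ on $V[p^*]$ with $w(p') \leq 2w(T) \leq 2c^*$. Because the constraints of $P$ are order-free and $p^*$ satisfies them, every permutation of $V[p^*]$ — including $p'$ — also satisfies those constraints.

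The main obstacle is to ensure that the shortcutting also preserves the consecutive-visit constraint for every cluster $V_i \in C$. By Lemma~\ref{thrm:mst_edges}, each cluster contributes exactly one inter-set edge to $T$, so the intra-cluster MST edges connect $V_i \cap V[p^*]$ into a connected subtree $T_i$ attached to the rest of $T$ at a single vertex. Under a DFS traversal of $T$ rooted outside $V_i$, the vertices of $T_i$ appear as a contiguous block in the DFS preorder: the traversal enters $T_i$ through its single attachment edge, exhausts $T_i$, and leaves. Standard shortcutting retains only first appearances, so $V_i \cap V[p^*]$ is a consecutive block in $p'$. The same reasoning handles nested clusters $V_j \subseteq V_i$: $T_j$ is a pendant subtree of $T_i$, so its preorder block nests inside that of $V_i$, and both consecutive-visit constraints are simultaneously met.

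A small subtlety is that $\Gamma_i$ in Definition~\ref{def:gamma_cluster} is computed over the full vertex set $V$, whereas I apply Lemma~\ref{thrm:mst_edges} to the MST on $V[p^*]$. This is harmless: restricting to $V[p^*]$ can only decrease $\beta_i$ and only increase $\alpha_i$, so $\Gamma_i > 1$ continues to hold on the restricted vertex set and Lemma~\ref{thrm:mst_edges} applies as stated. Combined with $w(p') \leq 2c^*$, the feasibility of $p'$ for $P'$ yields $(c')^* \leq w(p') \leq 2c^*$.
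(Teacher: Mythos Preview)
Your proposal is correct and follows essentially the same MST-doubling approach as the paper: build an MST on $V[p^*]$, double, take an Euler tour, shortcut, and invoke Lemma~\ref{thrm:mst_edges} to argue that each cluster is entered and exited exactly once so the consecutive-visit constraints hold. Your write-up is in fact more careful than the paper's on several points the paper glosses over --- the DFS/preorder justification for consecutive blocks (including nesting), the explicit appeal to order-freeness for feasibility in $P$, and the observation that passing from $V$ to $V[p^*]$ can only increase each $\Gamma_i$.
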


\begin{proof}
  To prove the above result, we will use the MST approach described below and in
  \cite{korte2012combinatorial} to construct a path $p'$ over the set
  of vertices in $V[p^*]$. This approach yields a solution $p'$ for $P'$ that
  has our desired cost bound, $c' \leq 2c^*$~\cite{korte2012combinatorial}.
  The MST procedure is described below.
  \begin{enumerate}
    \item Find a minimum spanning tree for the vertices $V[p^*]$.
    \item Duplicate each edge in the tree to create a Eulerian graph.
    \item Find an Eulerian tour of the Eulerian graph.
    \item Convert the tour to a \tsp: if a vertex is visited more than once, after the first visit, create a
          shortcut from the vertex before to the one after, i.e., create a tour
          that visits the vertices in the order they first appeared in the tour.
  \end{enumerate}
  
  What remains is to prove that the above tour is a feasible solution for $P'$. First we
  note that the above approach yields a single tour of all the vertices in
  $V[p^*]$, i.e., there are no disconnected tours. Next we note that
  Lemma~\ref{thrm:mst_edges} states that every MST uses exactly one inter-set
  edge for each cluster $V_i \in C$. Thus when the edges are duplicated and a
  Eulerian tour is found, there are only two inter-set edges used for each $V_i
  \in C$. Furthermore short-cutting the path does not change the number of
  inter-set edges used by the tour, thus the final solution $p'$ only has
  one incoming and one outgoing edge for each cluster $V_i \in C$ and so it is a
  clustered solution for $P'$ that satisfies the bound since the \mst approach
  also yields a solution with cost $c' \leq 2c^*$.
\end{proof}

Next we prove the second half of the bound $(c')^* \leq \left(1 + \frac{3}{2\Gamma}\right)c^*$ by using Algorithm~\ref{alg:deform} to construct a feasible solution $p'$ for $P'$. Additionally we use a modified graph $\hat{G}$ defined in Definition~\ref{def:G_hat} to show that the cost of this solution satisfies our desired bound.

%
\begin{algorithm}
  \caption{\textsc{deform}$(p,V_i)$}
  \label{alg:deform}
  \If{$p$ has two or less inter-set edges for $V_i$}{
    \Return $p$
  }
  $k \leftarrow 1$ \\
  $p' \leftarrow [\;]$ \tcc*{empty array}
  \While{$p[k] \not\in V_i$}{
    $p'.\textup{append}(p[k])$ \\
    $k \leftarrow k + 1$ \\
  }
  \For{$l \in [k, k+1,\ldots,|p|]$}{
    \If{$p[l] \in V_i$}{
      $p'.\textup{append}(p[l])$
    }
  }
  \For{$l \in [k, k+1,\ldots,|p|]$}{
    \If{$p[l] \not\in V_i$}{
      $p'.\textup{append}(p[l])$
    }
  }
  \Return $p'$
\end{algorithm}

\begin{lemma}[Correctness of Algorithm~\ref{alg:deform}]
  \label{thrm:alg_correctness}
  Given a feasible path $p$ for $P$ and a cluster $V_i \in C$ that is not
  visited consecutively, then $p_i \leftarrow \textsc{deform}(p,V_i)$ does visit
  $V_i$ consecutively and any subsequent deformed paths $p_{i+n+1} \leftarrow
  \textsc{deform}(p_{i+n},V_{i+n+1})$ also visit $V_i$ consecutively, for $n
  \in \ZZ_{> 0}$.
\end{lemma}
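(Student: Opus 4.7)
The plan is to prove the two assertions separately. The first, that $\textsc{deform}(p,V_i)$ does visit $V_i$ consecutively, can be verified by a direct trace of the pseudocode. The second, that every later application $\textsc{deform}(p_{i+n},V_{i+n+1})$ preserves this property, will follow by induction on $n$ once I exploit the structural guarantee of Definition~\ref{def:gamma_clusters} that any two clusters in a \gammaClustering are either nested or disjoint.

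For the first assertion I would inspect the algorithm step by step. If $p$ already has at most two inter-set edges for $V_i$, the path visits $V_i$ consecutively and the algorithm returns $p$ unchanged. Otherwise, when the while loop exits at index $k$, the vertex $p[k]$ is the first entry of $p$ lying in $V_i$, and $p'$ holds $p[1..k-1]$. The two subsequent for loops scan the suffix $p[k..|p|]$: the first appends every vertex of $V_i$ in the order they appear, and the second appends every vertex outside $V_i$ in the order they appear. Consequently the $V_i$-vertices form a single contiguous block of length $|V_i\cap V[p]|$ beginning at position $k$ of $p'$, which is exactly the definition of consecutive visit.

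For the inductive step, assume $V_i$ is visited consecutively in $p_{i+n}$ and write $q = \textsc{deform}(p_{i+n},V_{i+n+1})$. By Definition~\ref{def:gamma_clusters} applied to the \gammaClustering $C$, the sets $V_i$ and $V_{i+n+1}$ are either disjoint or nested. If they are disjoint, the $V_i$-block of $p_{i+n}$ contains no $V_{i+n+1}$-vertex, so it sits entirely inside one of the two regions of $q$ whose relative order \textsc{deform} preserves verbatim: either the prefix before the first $V_{i+n+1}$-vertex, or the final string of non-$V_{i+n+1}$-vertices. In either region no $V_{i+n+1}$-vertex is interleaved, so $V_i$ stays consecutive in $q$. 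The case $V_i\subseteq V_{i+n+1}$ is similarly immediate: every $V_i$-vertex falls inside the newly contiguous $V_{i+n+1}$-block of $q$, and the algorithm preserves the internal relative order inherited from $p_{i+n}$.

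The only subtle case, and the one I expect to be the main obstacle, is $V_{i+n+1}\subsetneq V_i$. Because $V_{i+n+1}\subseteq V_i$ and the $V_i$-block is contiguous in $p_{i+n}$, the first $V_{i+n+1}$-vertex in $p_{i+n}$ sits strictly inside that block. Hence the tail of the $q$-prefix that originated inside the block, the $V_{i+n+1}$-block placed next, and the portion of the trailing non-$V_{i+n+1}$-vertices that originated inside the block are all composed of $V_i$-vertices, while no vertex outside $V_i$ is inserted between them. Making this bookkeeping precise closes the induction and the lemma.
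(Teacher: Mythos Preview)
Your argument is correct, and it takes a genuinely different route from the paper's. The paper does not invoke the nested-or-disjoint property of Definition~\ref{def:gamma_clusters} at all; instead it argues by contradiction that no new inter-set edge for $V_i$ can be created by any phase of \textsc{deform}, walking through lines~5--7, 8--10, and 11--13 separately. Your proof, by contrast, case-splits on the three possible relationships between $V_i$ and $V_{i+n+1}$ (disjoint, $V_i\subseteq V_{i+n+1}$, $V_{i+n+1}\subsetneq V_i$) and verifies contiguity directly in each.

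What each approach buys: the paper's phase-by-phase trace is shorter, but as written it is incomplete---the assertion that lines~8--10 ``only connect vertices within $V_j$'' does not by itself rule out a non-$V_i$ vertex from $V_j$ being placed between two $V_i$-vertices; that step tacitly needs exactly the nested-or-disjoint hypothesis you use (with overlapping clusters the lemma is simply false). Your case analysis makes the dependence on Definition~\ref{def:gamma_clusters} explicit and is therefore more robust. One small tightening: in the case $V_i\subseteq V_{i+n+1}$, ``preserves relative order'' is not by itself enough for contiguity---you should also say that every vertex sitting between two $V_i$-vertices in $p_{i+n}$ already lies in $V_i\subseteq V_{i+n+1}$, so no foreign vertex is interleaved inside the $V_{i+n+1}$-block. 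With that remark your inductive step is complete.
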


\begin{proof}
  By construction $V_i$ is visited consecutively in $p_i \leftarrow
  \textsc{deform}(p,V_i)$. The remaining claim that $V_i$ continues to be
  visited consecutively is proven by showing that the number of inter-set edges
  for $V_i$ in the subsequent paths remains the same.
  
  We prove this result by contradiction. Suppose there is a sequence 
  $\tuple{v_a, v_b} \in p_{j-1}$ for some $j > i$ such that $v_a, v_b \in V_i$
  and $v_a$ is no longer connected to $v_b$ in $p_{j} \leftarrow
  \textsc{deform}(p_{j-1},V_j)$; that is, $\tuple{v_a, v_b} \not\in p_{j}$.
  
  This would mean that one or more vertices were inserted in between $v_a$ and
  $v_b$, thus creating a new inter-set edge. In Algorithm~\ref{alg:deform}, this
  cannot happen in lines 5-7 since the path for the $1^{st}$ vertex to the
  $k^{th}$ is unchanged. This also cannot happen in lines 8-9 since
  this part of the algorithm is only connecting vertices within the cluster
  $V_j$ together. Finally, this cannot happen in lines 11-13, since the
  path is not changing the order of the appearance of $v_a$ and $v_b$ (no
  insertions, just deletions).
  
  Thus there are no additional inter-set edges created by
  Algorithm~\ref{alg:deform} for cluster $V_i$, which highlights our
  contradiction. Therefore all subsequent paths must also visit $V_i$
  consecutively.
\end{proof}

\begin{remark}[Uniqueness]
  \label{thrm:deform_order}
  When Algorithm~\ref{alg:deform} is applied to $p^*$ iteratively for each $V_i
  \in C$ to generate the solution $p'$, the solution is unique despite the order
  that $\textsc{deform}(p,V_i)$ was called for all $V_i \in C$. Furthermore the
  order of the clusters is determined by their first appearance in $p$.
  
  This follows from the method in which Algorithm~\ref{alg:deform} reorders
  the vertices within the tour. Specifically, the order of vertices within each cluster $V_i$ is preserved as $\textsc{deform}(p,V_i)$ is
  called as is the ordering of the remaining vertices.
\end{remark}

%
\begin{lemma}[Deformation cost in $G$]
  \label{thrm:deform_cost}
  Consider a feasible path $p$ for $P$ that has $2(n+1) \geq 4$ inter-set edges
  for \gammaCluster $V_i$ such that $\Gamma_i > 1$ and $n \in \ZZ_{\geq 0}$.
  Then the cost to deform $p$ into $p' \leftarrow \textsc{deform}(p,V_i)$ is $c'
  - c \leq (2n+1)\beta_i$.
\end{lemma}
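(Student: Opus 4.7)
The plan is to identify precisely the $2n+1$ edges inserted and $2n+1$ edges deleted by \textsc{deform}$(p,V_i)$, and then to bound the resulting cost change by $(2n+1)\beta_i$ using the triangle inequality together with the intra-cluster bound $w(u,u')\leq\beta_i$ for $u,u'\in V_i$.

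First I would fix notation for the structure of $p$. Since $p$ has $2(n+1)$ inter-set edges for $V_i$, it decomposes into a prefix $P_0 = p[1],\ldots,p[k-1] \subseteq V\setminus V_i$ followed by alternating maximal blocks $A_1, B_1, A_2, B_2, \ldots, A_{n+1}, B_{n+1}$ with $A_j\subseteq V_i$ and $B_j\subseteq V\setminus V_i$. Writing $s(X),t(X)$ for the first and last vertex of a block $X$, Algorithm~\ref{alg:deform} produces $p' = P_0, A_1, A_2, \ldots, A_{n+1}, B_1, B_2, \ldots, B_{n+1}$. All internal edges of $P_0$ and of each $A_j, B_j$, the entering edge $p[k-1]\to s(A_1)$, and (in the cyclic case) the closing edge back to $p[1]$ are common to $p$ and $p'$, so they cancel in $c' - c$.

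What remains are the inserted edges — the intra-cluster links $w(t(A_j), s(A_{j+1}))$ for $j=1,\ldots,n$, the bridge $w(t(A_{n+1}), s(B_1))$, and the non-cluster links $w(t(B_j), s(B_{j+1}))$ for $j=1,\ldots,n$ — minus the deleted edges $w(t(A_j), s(B_j))$ for $j=1,\ldots,n+1$ and $w(t(B_j), s(A_{j+1}))$ for $j=1,\ldots,n$. Each of the $n$ intra-cluster insertions is at most $\beta_i$, contributing $n\beta_i$. For each non-cluster insertion, applying the triangle inequality twice — routing through $s(A_{j+1})$ and then $t(A_{j+1})$ — yields
\[
w(t(B_j), s(B_{j+1})) \leq w(t(B_j), s(A_{j+1})) + \beta_i + w(t(A_{j+1}), s(B_{j+1})),
\]
and summing over $j=1,\ldots,n$ cancels all $n$ deletions $w(t(B_j), s(A_{j+1}))$ together with the deletions $w(t(A_j), s(B_j))$ for $j=2,\ldots,n+1$, contributing a surplus of $n\beta_i$ and leaving $w(t(A_1), s(B_1))$ as the only uncancelled deletion. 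Finally, the bridge is handled by $w(t(A_{n+1}), s(B_1)) \leq w(t(A_{n+1}), t(A_1)) + w(t(A_1), s(B_1)) \leq \beta_i + w(t(A_1), s(B_1))$, which consumes this last deletion and adds one more $\beta_i$. Summing the three contributions gives $c' - c \leq n\beta_i + n\beta_i + \beta_i = (2n+1)\beta_i$.

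The main obstacle is the bookkeeping: aligning each triangle-inequality right-hand side with a deletion that is genuinely subtracted from $c'-c$, so that every cancellation goes through. Boundary cases — the prefix $P_0$ being empty when $p[1]\in V_i$, or $B_{n+1}$ being empty when $p$ ends in $V_i$ — do not affect the counts of inserted and deleted edges; in the cyclic case they can also be neutralized by a cyclic rotation placing $p[1]$ at the start of a non-$V_i$ block, so a single analysis covers both the cyclic and non-cyclic instances.
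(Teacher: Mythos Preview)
Your proposal is correct and follows essentially the same approach as the paper: both arguments classify the new edges produced by \textsc{deform} into the $n$ intra-cluster shortcuts, the $n$ extra-cluster shortcuts, and the single new outgoing (``bridge'') edge, and bound each contribution by $\beta_i$ via the triangle inequality routed through endpoints in $V_i$. Your block notation $A_j,B_j$ makes the edge-by-edge cancellation more explicit than the paper's prose, but the substance is identical; the only place to be slightly more careful is the boundary case $P_0=\emptyset$ for non-cyclic paths, where the inserted/deleted counts become $2n+2$ rather than $2n+1$, though the extra deleted inter-set edge (of weight $\geq\alpha_i>\beta_i$) still yields the stated bound.
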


\begin{proof}
  In this proof we analyze the cost to deform $p$ into $p'$, which is a result
  of calling $p' \leftarrow \textsc{deform}(p,V_i)$
  (Algorithm~\ref{alg:deform}). There are three types of deformations that
  result from the algorithm: 1) there are short cuts created within the
  cluster; 2) there are short cuts created outside of the cluster; and 3) there
  is a new outgoing edge for the cluster. These deformations are illustrated in
  Figure~\ref{fig:deform03} and a classification of the edges in the figure are
  as follows: 1) edges \tuple{v_3, v_4} and \tuple{v_6, v_7} are short cuts
  within the cluster; 2) edges \tuple{v_{10}, v_{11}} and \tuple{v_{12}, v_{13}}
  are short cuts outside of the cluster; and 3) edge \tuple{v_8, v_9} is the new
  outgoing the edge for the cluster.

  We start by examining the incurred cost to short cut paths within the cluster.
  Consider a path segment \tuple{v_a, v_b, v_c, \ldots, v_x, v_y, v_z} of $p$
  such that $v_a$ is directly connected to $v_z$ in $p'$ with the edge
  \tuple{v_a, v_z} and $v_a, v_z \in V_i$. The incurred cost of each of these
  edges is $\leq \beta_i$ due to the fact that the cost of any intra-set edge
  has weight $\leq \beta_i$. There are $n$ such shortcuts of this nature
  incurred from performing \textsc{deform}$(p,V_i)$ ($n$ captures the
  number of extra visits to the cluster), and so the total incurred cost for
  this type of shortcut is $\leq n\beta_i$.

  Next we examine the incurred cost to short cut paths outside of the cluster.
  Consider a path segment \tuple{v_a, v_b, v_c, \ldots, v_x, v_y, v_z} of $p$
  such that $v_a$ is directly connected to $v_z$ in $p'$ with the edge
  \tuple{v_a, v_z}, $v_a, v_z \not \in V_i$ and $v_b, v_c, \ldots, v_y \in V_i$.
  The incurred cost for each of these short cuts is again $\leq \beta_i$. This
  is due to the metric property of $G$: The cost of the direct path from
  $v_a$ to $v_z$ is less than or equal to any path from $v_a$ to $v_z$,
  specifically $c(v_a, v_z) \leq c(v_a, v_b) + c(v_b, v_y) + c(v_y, v_z) \leq
  c(v_a, v_b) + \beta_i + c(v_y, v_z)$. Thus the incurred cost $\Delta$, of this
  shortcut is bounded by the difference between the cost of the new edges in
  $p'$ and the removed edges in $p^*$, namely:
  \begin{align*}
     \Delta &= c(v_a, v_z) - c(v_a,v_b) - c(v_y, v_z) \\
            &\leq c(v_a, v_b) + \beta_i + c(v_y, v_z) - c(v_a,v_b) - c(v_y, v_z) \\
            & \leq \beta_i 
  \end{align*}
  There are $n$ such shortcuts of this nature incurred by
  \textsc{deform}$(p,V_i)$, and so the total incurred cost for this type of
  shortcut is also $\leq n\beta_i$.

  Lastly we examine the incurred cost of the new outgoing edge. Consider the
  path \tuple{v_a, v_b, v_c, \ldots, v_x, v_y, v_z} of $p$ such that
  \tuple{v_b,v_c} is the first outgoing edge of $V_i$ and \tuple{v_x,v_y} is the
  last outgoing edge of $V_i$, thus \tuple{v_x, v_c} is the new outgoing edge.
  Then due to the metric property we know that $c(v_x, v_c) \leq c(v_x,v_b) +
  c(v_b,v_c) \leq \beta_i + c(v_b,v_c)$. The incurred cost of this deformation
  is the difference between the cost of the new edge \tuple{v_x,v_c} and the
  removed edge \tuple{v_b,v_c} (this edge has not been considered in any
  previous incurred cost calculation):
  \begin{align*}
     \Delta &= c(v_x, v_c) - c(v_b,v_c) \\
            &\leq \beta_i + c(v_b,v_c) - c(v_b,v_c) \\
            & \leq \beta_i 
  \end{align*}

  This accounts for all of the incurred costs, and so the total cost to deform
  $p$ into $p'$ via \textsc{deform}$(p,V_i)$ is $c' - c \leq (2n+1)\beta_i$.
\end{proof}

We introduce the modified graph $\hat{G}$ in the following definition to aid with our ongoing proof of the bound.

%
\begin{definition}[The modified graph]
  \label{def:G_hat}
  Given a graph $G$ and a \gammaClustering $C$, \emph{the modified graph} $\hat{G}$ is
  a copy of $G$ with the following modifications: if $\tuple{v_a, v_b}$ is an
  inter-set edge with $v_a \in V_i$ and $v_b \in V_j$ then $\hat{c}(v_a, v_b) =
  c(v_a, v_b) + \frac{3}{2}\max(\beta_i, \beta_j)$, otherwise $\hat{c}(v_a, v_b)
  = c(v_a, v_b)$, where $\beta_i$ and $\beta_j$ are as defined in
  Definition~\ref{def:gamma_cluster}.
\end{definition}

\begin{lemma}[Deformation cost in $\hat{G}$]
  \label{thrm:no_deform_cost}
  Consider a feasible path $p$ for $P$ and a \gammaCluster $V_i$ such that
  $\Gamma_i > 1$. Then the cost to deform $p$ into $p' \leftarrow
  \textsc{deform}(p,V_i)$ in $\hat{G}$ is $\hat{c}' - \hat{c} \leq 0$.
\end{lemma}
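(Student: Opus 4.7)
Plan. If $p$ already visits $V_i$ consecutively (at most two inter-set edges for $V_i$), Algorithm~\ref{alg:deform} returns $p' = p$ and the bound holds trivially. Otherwise $p$ makes $n+1 \ge 2$ maximal ``visits'' to $V_i$ and contributes $2(n+1) \ge 4$ inter-set edges, so I may assume $n \ge 1$. My approach is to pair the edges \textsc{deform} removes with those it adds and bound each pair's contribution to $\hat{c}' - \hat{c}$ separately.

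For each extra visit $k \in \{2, \ldots, n+1\}$ (which gets merged into the $V_i$-block of $p'$), I pair the two removed inter-set edges of visit~$k$, its in-edge $e^{\mathrm{in}}_k$ and out-edge $e^{\mathrm{out}}_k$, with the two added edges: an intra-$V_i$ shortcut $s^{\mathrm{in}}_k$ splicing visit~$k$ into the $V_i$-block, and an external shortcut $s^{\mathrm{out}}_k$ bridging the predecessor of visit~$k$ directly to its successor. The remaining ``residual'' change replaces visit~1's out-edge $e^{\mathrm{out}}_1$ with the new outgoing edge $\tilde{e}$ from the end of the $V_i$-block to the former successor of visit~1. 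This exhausts all $2n+1$ removed and $2n+1$ added edges identified in the proof of Lemma~\ref{thrm:deform_cost}.

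For each absorbed visit $k$, triangle-inequality arguments inherited from the proof of Lemma~\ref{thrm:deform_cost} give a base-cost change $c(s^{\mathrm{in}}_k) + c(s^{\mathrm{out}}_k) - c(e^{\mathrm{in}}_k) - c(e^{\mathrm{out}}_k) \le 2\beta_i$. The inflation change in $\hat{G}$ is strongly negative: both removed edges are inter-set for $V_i$ and so carry inflation involving $\beta_i$, whereas $s^{\mathrm{in}}_k$ is intra-$V_i$ (no inflation) and $s^{\mathrm{out}}_k$ has both endpoints outside $V_i$, so it at worst crosses two clusters whose $\beta$'s do not involve $V_i$. A short algebraic computation on Definition~\ref{def:G_hat} then yields a per-piece contribution $\le -\beta_i$. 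For the residual piece, $\tilde{e}$ and $e^{\mathrm{out}}_1$ share the same non-$V_i$ endpoint and the same $V_i$-side cluster, so their inflations cancel exactly, and the triangle bound for the new outgoing edge gives a contribution $\le \beta_i$. Summing the $n$ absorbed visits and the residual yields $\hat{c}' - \hat{c} \le n(-\beta_i) + \beta_i = (1-n)\beta_i \le 0$.

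The main obstacle I anticipate is the inflation accounting for $s^{\mathrm{out}}_k$ when its endpoints lie in two clusters $V_{j^-}, V_{j^+}$ distinct from $V_i$, so that $s^{\mathrm{out}}_k$ is itself an inter-set edge. Showing that its inflation is dominated by the combined inflations of the two removed $V_i$-incident edges requires a careful inequality on the $\max$-based formula of Definition~\ref{def:G_hat} together with a separate treatment of the case $V_{j^-} = V_{j^+}$ (where $s^{\mathrm{out}}_k$ is actually intra-cluster and contributes zero inflation); the hypothesis $\Gamma_i > 1$ is used only implicitly through the $\Gamma$-clustering structure guaranteeing $\alpha_i > \beta_i$ and consistent separation bounds for the nearby clusters.
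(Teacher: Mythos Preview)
Your per-piece pairing is a different route from the paper's proof, which is purely aggregate: it never pairs edges, but simply combines the bound $c'-c\le(2n{+}1)\beta_i$ from Lemma~\ref{thrm:deform_cost} with the global count of $V_i$-crossing edges ($2(n{+}1)$ in $p$ versus $2$ in $p'$, each inflated by at least $\tfrac{3}{2}\beta_i$ in $\hat G$) to obtain $\hat c'-\hat c\le(2n{+}1)\beta_i+3\beta_i-3(n{+}1)\beta_i=(1-n)\beta_i\le0$. In particular the paper's argument does not separately track the inflation carried by the external shortcuts $s^{\mathrm{out}}_k$ at all.

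Your claimed per-piece contribution $\le-\beta_i$ does not hold, and the obstacle you flag in your last paragraph is precisely where it breaks. Let $V_{j^-},V_{j^+}$ be the clusters of the two external endpoints of visit $k$. The removed edges $e^{\mathrm{in}}_k,e^{\mathrm{out}}_k$ carry inflations $\tfrac{3}{2}\max(\beta_i,\beta_{j^-})$ and $\tfrac{3}{2}\max(\beta_i,\beta_{j^+})$; the intra shortcut $s^{\mathrm{in}}_k$ carries none; and when $j^-\neq j^+$ the external shortcut $s^{\mathrm{out}}_k$ carries $\tfrac{3}{2}\max(\beta_{j^-},\beta_{j^+})$. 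In the regime $\beta_{j^-}\ge\beta_i\ge\beta_{j^+}$ the net inflation change on this piece is
\[
\tfrac{3}{2}\bigl[\max(\beta_{j^-},\beta_{j^+})-\max(\beta_i,\beta_{j^-})-\max(\beta_i,\beta_{j^+})\bigr]
=\tfrac{3}{2}\bigl[\beta_{j^-}-\beta_{j^-}-\beta_i\bigr]=-\tfrac{3}{2}\beta_i,
\]
not the $-3\beta_i$ you would need to offset the $2\beta_i$ base-cost change and reach $-\beta_i$. The per-piece bound is therefore only $\le\tfrac{1}{2}\beta_i$, and summing $n$ such pieces with your $+\beta_i$ residual gives $(\tfrac{n}{2}{+}1)\beta_i>0$, not $(1-n)\beta_i\le0$. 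The ``short algebraic computation'' you allude to cannot close the gap: the identity $\max(a,b)-\max(c,a)-\max(c,b)\le-c$ is tight (take $a>c>b$), so Definition~\ref{def:G_hat} only guarantees an inflation saving of $\tfrac{3}{2}\beta_i$ per absorbed visit, not $3\beta_i$.
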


\begin{proof}
  In this proof we analyze the cost to deform $p$ into $p'$in $\hat{G}$, which
  is a result of single call $p' \leftarrow \textsc{deform}(p,V_i)$
  (Algorithm~\ref{alg:deform}).
  
  From Lemma~\ref{thrm:deform_cost} we see that the cost to deform $p$ into $p'$ with
  respect to $G$ is $c' - c \leq (2n+1)\beta_i$, where there are $2(n+1) \geq 4$
  inter-set edges for $V_i$ in $p$. The cost of $p$ in $\hat{G}$ is
  \[ \hat{c} \geq c + 2(n+1)\left(\frac{3}{2}\right)\beta_i = c + 3(n+1)\beta_i \]
  and the cost of $p'$ in $\hat{G}$ is
  \[ \hat{c}' \leq c' + 2\left(\frac{3}{2}\beta_i\right) \leq c + (2n+1)\beta_i + 3\beta_i. \]
  Thus
  \begin{align*}
    \hat{c}' - \hat{c} 
        &\leq (2n+1)\beta_i + 3\beta_i - 3(n+1)\beta_i \\
        &=    2n\beta_i + \beta_i + 3\beta_i - 3n\beta_i - 3\beta_i \\
        &=    \beta_i - n\beta_i \\
  \end{align*}
  and since $n \geq 1$ (otherwise we did not need to deform the path), then
  $\hat{c}' - \hat{c} \leq 0$.
\end{proof}

\begin{lemma}[Approximation Factor]
  \label{thrm:gamma_factor_approximation}
  Given a metric discrete path planning problem $P$ with optimal solution cost
  $c^*$, $\Gamma > 1$ and a \gammaClustering $C = \{V_1, V_2, \ldots, V_m \}$,
  then the optimal solution $(p')^*$ to the clustered problem $P'$ over the same
  set of vertices is a solution to $P$ with cost $(c')^* \leq (1 +
  \frac{3}{2\Gamma}) c^*$.
\end{lemma}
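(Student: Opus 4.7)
The plan is to construct, from the optimum $p^*$ of $P$, a feasible path $p'$ for $P'$ on the same vertex set by repeatedly applying Algorithm~\ref{alg:deform}, and then to compare its cost in the modified graph $\hat{G}$. Concretely, I would order the clusters $V_1,\ldots,V_m$ by the first appearance of one of their vertices in $p^*$, set $p_0 := p^*$, and iteratively define $p_i := \textsc{deform}(p_{i-1}, V_i)$ for $i = 1,\ldots,m$. By Lemma~\ref{thrm:alg_correctness}, each cluster $V_i$ is visited consecutively in $p_i$ and continues to be so in all subsequent $p_{i+n}$, so the final path $p' := p_m$ satisfies the consecutive-visit constraints for every cluster in $C$; together with the fact that Algorithm~\ref{alg:deform} only reorders vertices (never adding or removing any), $p'$ is a feasible solution for $P'$ on $V[p^*]$.

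Next I would track the cost of the sequence $p_0, p_1, \ldots, p_m$ in the modified graph $\hat{G}$ of Definition~\ref{def:G_hat}. Applying Lemma~\ref{thrm:no_deform_cost} at each step gives $\hat{c}(p_i) \leq \hat{c}(p_{i-1})$ (both when $V_i$ was already consecutive, trivially, and when it needed to be deformed, by the lemma), so telescoping yields
\[
    \hat{c}(p') \;\leq\; \hat{c}(p^*).
\]
Since the inflation in $\hat{G}$ is non-negative, the ordinary cost satisfies $c(p') \leq \hat{c}(p')$, so it remains to upper-bound $\hat{c}(p^*)$ by $(1 + \tfrac{3}{2\Gamma}) c^*$.

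For that bound I would argue edge-by-edge. If $\tuple{v_a, v_b}$ is not an inter-set edge, $\hat{c}$ and $c$ agree on it. If $\tuple{v_a, v_b}$ is an inter-set edge with $v_a \in V_i$ and $v_b \in V_j$, then by Definition~\ref{def:gamma_cluster} and the hypothesis $\Gamma_i, \Gamma_j \geq \Gamma$ we have $c(v_a, v_b) \geq \alpha_i \geq \Gamma \beta_i$ and $c(v_a, v_b) \geq \alpha_j \geq \Gamma \beta_j$, so $\max(\beta_i, \beta_j) \leq c(v_a, v_b)/\Gamma$. Therefore
\[
    \hat{c}(v_a, v_b) \;=\; c(v_a, v_b) + \tfrac{3}{2}\max(\beta_i, \beta_j) \;\leq\; \bigl(1 + \tfrac{3}{2\Gamma}\bigr)\, c(v_a, v_b).
\]
Summing over all edges of $p^*$ gives $\hat{c}(p^*) \leq (1 + \tfrac{3}{2\Gamma}) c^*$, and combining with the chain above,
\[
    (c')^* \;\leq\; c(p') \;\leq\; \hat{c}(p') \;\leq\; \hat{c}(p^*) \;\leq\; \bigl(1 + \tfrac{3}{2\Gamma}\bigr) c^*,
\]
since $(p')^*$ is the optimum of $P'$ and $p'$ is a feasible competitor. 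The main obstacle I expect is making sure Lemma~\ref{thrm:no_deform_cost} actually applies at every iteration: it is stated for a single deform call on a cluster with $\Gamma_i > 1$, so one must verify that the intermediate paths $p_{i-1}$ remain legitimate feasible paths of $P$ (they do, as the deform is a vertex permutation and the constraints are order-free) and that the inter-set edge count of $V_i$ in $p_{i-1}$ is unchanged from $p^*$, so that the $(2n+1)\beta_i$ accounting inside the lemma remains valid even after earlier clusters have been processed.
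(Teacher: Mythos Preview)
Your proposal is correct and follows essentially the same route as the paper: iterate \textsc{deform} over all clusters to obtain a feasible $p'$, use Lemma~\ref{thrm:no_deform_cost} to get $\hat c(p')\le \hat c(p^*)$, bound $\hat c(p^*)$ edge-by-edge via $\max(\beta_i,\beta_j)\le c(v_a,v_b)/\Gamma$, and chain $(c')^*\le c(p')\le \hat c(p')\le \hat c(p^*)\le (1+\tfrac{3}{2\Gamma})c^*$. Your added remarks on feasibility of the intermediate $p_{i-1}$ (order-free constraints) are a nice tightening; note that you do not actually need the inter-set edge count of $V_i$ in $p_{i-1}$ to match that in $p^*$, since Lemma~\ref{thrm:no_deform_cost} applies to any feasible path and any $n\ge 1$ (and is trivial when the cluster is already consecutive).
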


\begin{proof}
To prove the above result we will work with the modified graph $\hat{G}$ (defined in Definition~\ref{def:G_hat}) and use the result from Lemma~\ref{thrm:no_deform_cost} to show that there exists a clustered solution in $\hat{G}$ that has the same cost or lower than $\hat{c}(p^*)$. Then we will relate $(c')^*$ to $c^*$.

First we show that there exists a clustered solution $p'$ that satisfies the following:
\[ \hat{c}(p') \leq \hat{c}(p^*) \]
To find a solution $p'$ that satisfies the above we will use Algorithm~\ref{alg:deform} to deform $p^*$ into $p'$. The deform algorithm is called for each $V_i \in C$ in any order as $p_{i+1} = \textsc{deform}(p_i,V_i)$ to form a solution for $P'$ (see Lemma~\ref{thrm:alg_correctness}). For each call of $p_{i+1} = \textsc{deform}(p_i,V_i)$ the incurred cost $\hat{c}(p_{i}) - \hat{c}(p_{i+1}) \leq 0$ (see Lemma~\ref{thrm:no_deform_cost}). Thus after the series of calls, we have a clustered solution $p'$ satisfying
\[ \hat{c}(p') \leq \hat{c}(p^*).\]

Next we relate $c(p^*)$ to $\hat{c}(p^*)$ by observing the following for an inter-set edge $\tuple{v_a, v_b} \in p^*$:
\begin{align*}
  \hat{c}(v_a, v_b) &= c(v_a, v_b) + \frac{3}{2}\max(\beta_i, \beta_j) \\
                    &= c(v_a, v_b) + \frac{3}{2}\max\left(\frac{\alpha_i}{\Gamma_i}, \frac{\alpha_j}{\Gamma_j}\right) \\
                    &\leq \left( 1 + \frac{3}{2}\max\left(\frac{1}{\Gamma_i}, \frac{1}{\Gamma_j} \right) \right) c(v_a, v_b) \\
                    &\leq \left( 1 + \frac{3}{2\Gamma} \right) c(v_a, v_b)
\end{align*}
The above inequality for $\hat{c}(p^*)$ is true for each edge $\tuple{v_a, v_b} \in p^*$, inter-set edge or not. Therefore
\[ \hat{c}(p^*) \leq \left( 1 + \frac{3}{2\Gamma} \right) c(p^*). \]

Due to the construction of $\hat{G}$ and how $\hat{c}(p') \leq \hat{c}(p^*)$ we deduce that
\[ (c')^* \leq \hat{c}(p^*), \]
since
\[ (c')^* \leq c(p') \leq \hat{c}(p') \leq \hat{c}(p^*). \]
Therefore we have
\[ (c')^* \leq \left( 1 + \frac{3}{2\Gamma} \right) c^*.\]
\end{proof}

%
  The proof of the main result in Theorem~\ref{thrm:approximation_factor} directly follows from Lemma~\ref{thrm:2_factor_approximation} and Lemma~\ref{thrm:gamma_factor_approximation}.  

\begin{figure}
  \centering
  \includegraphics[width=0.9\linewidth]{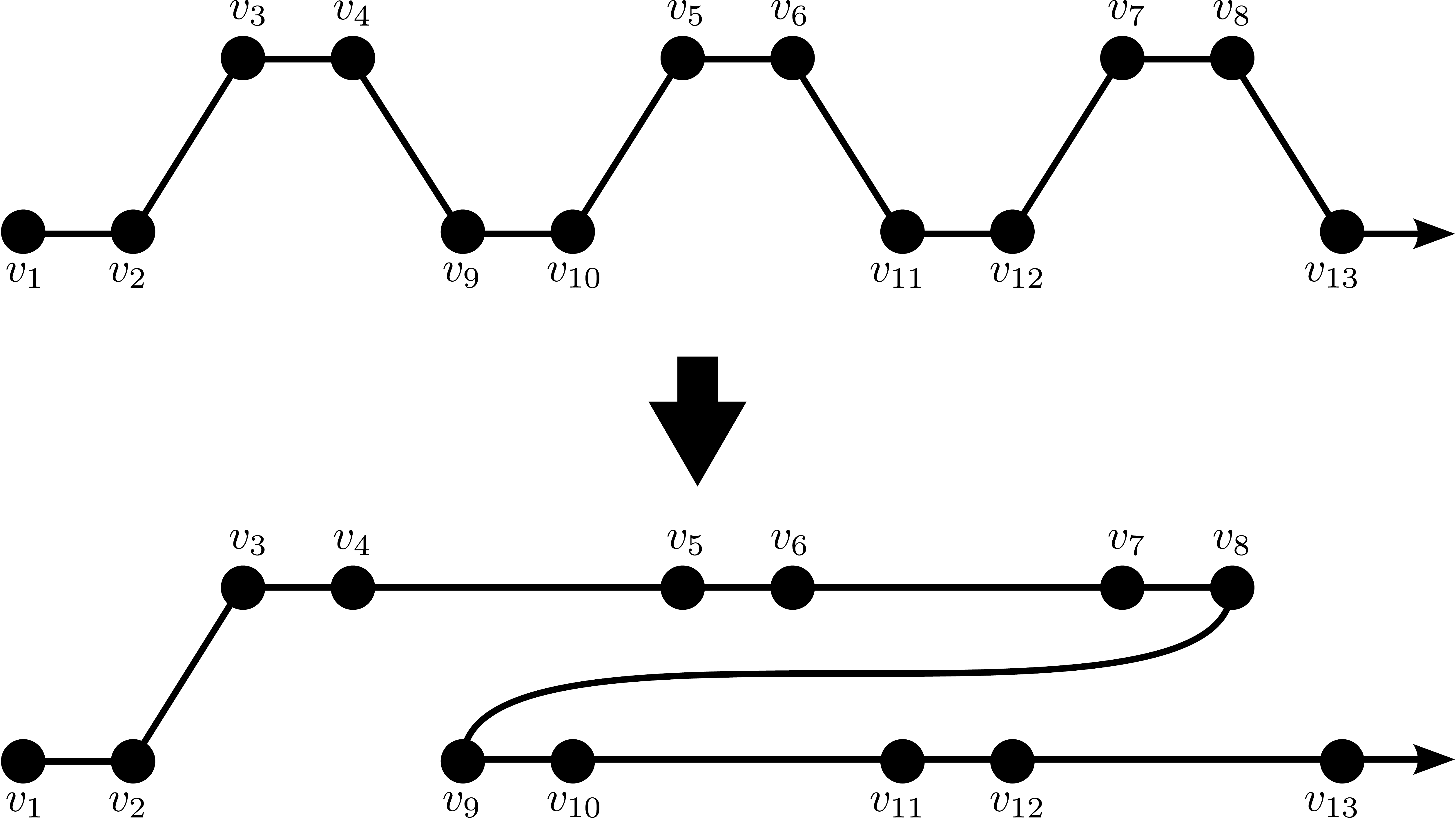}
  \caption{
    Path deformation example.
  }
  \label{fig:deform03}
\end{figure}

%
\begin{remark}[Tightness of Bound]
We have not fully characterized the tightness of the bounds from Lemma~\ref{thrm:2_factor_approximation} and Lemma~\ref{thrm:gamma_factor_approximation}, but we we have a lower bound, which is illustrated in Figure~\ref{fig:lower_bound}. In this example, the graph is scale-able (we can add vertices three at a time). The clustered and non-clustered solutions for this example have a cost relation of
\[ \lim_{|V| \rightarrow \infty} (c')^* = \left( 1 + \frac{2}{2\Gamma + 1} \right) c^*. \]
We also provide the graph in Figure~\ref{fig:bound} to show how the gap changes as $\Gamma$ varies.

The bound for this graph is obtained as follows. Let $n = \frac{|V|}{3} - 1$ for $\frac{|V|}{3} \in \ZZ_{>0}$. Then the optimal non-clustered solution cost is (recall that $\Gamma = \frac{\alpha}{\beta}$):
\begin{align*}
  p^* &= \tuple{v_1, v_2, v_3, v_4, v_6, v_5, \ldots} \\
  c^* &= \alpha + \beta + \alpha + n\left(2\alpha + \beta\right) \\
      &= \left( n+1 \right)\left( 2\alpha + \beta \right) \\
      &= \left( n+1 \right)\left( 2 + \frac{1}{\Gamma} \right) \alpha
\end{align*}

The optimal clustered solution is:
\begin{align*}
  (p')*   &= \tuple{v_1, v_2, v_3, v_5, v_6, \ldots, v_4, v_7, \ldots} \\
  (c')^*  &= \alpha + \beta + 2n\beta + \alpha + n(2\alpha + \beta) \\
          &= 2\alpha + 3\beta - 2\beta + n(2\alpha + 3\beta) \\
          &= (n+1)(2\alpha + 3\beta) - 2\beta \\
          &= \left( (n+1)(2 + \frac{3}{\Gamma}) - \frac{2}{\Gamma} \right) \alpha \\
\end{align*}

As the instance grows ($|V| \rightarrow \infty$, which implies $n \rightarrow \infty$), we have the following: 
\begin{align*}
  \lim_{n \rightarrow \infty} (c')^*
      &=  \frac{(n+1)\left(2 + \frac{3}{\Gamma} \right) - \frac{2}{\Gamma}}
               {\left( n+1 \right)\left(2 + \frac{1}{\Gamma} \right)} c^*  \\
      &=  \left( \frac{2 + \frac{3}{\Gamma}}{2 + \frac{1}{\Gamma}} \right) c^*  \\
      &=  \left( 1 + \frac{2}{2\Gamma + 1} \right) c^*  \\
\end{align*}
\end{remark}

\begin{figure}
  \centering
  \includegraphics[width=0.9\linewidth]{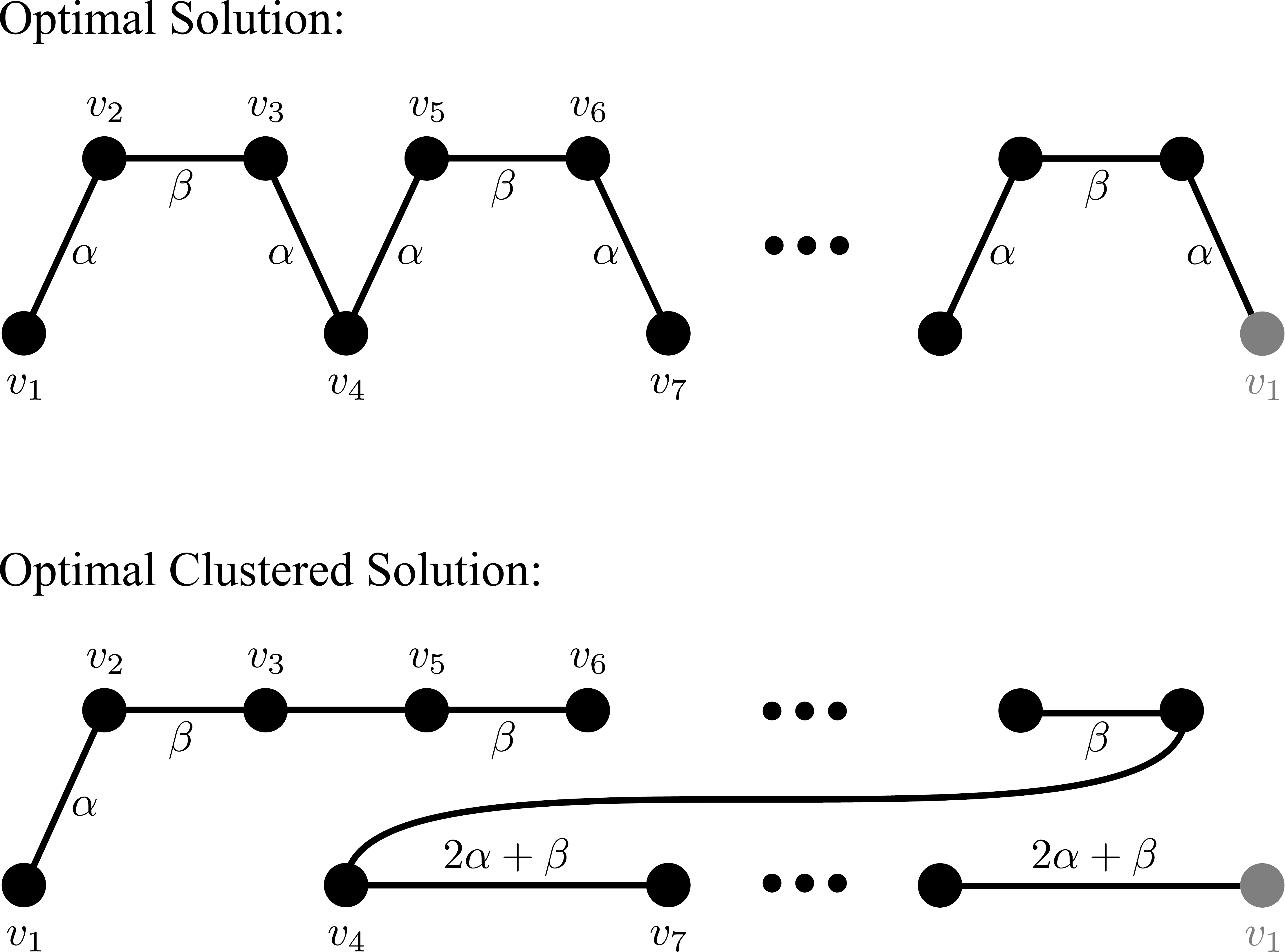}
  \caption{
    Metric example instance ($\alpha > \beta$). Vertices in the top row ($v_2,
    v_3, v_5, v_6, \ldots$) are in the cluster $V_i$ and the vertices in the
    bottom row are not. Edge weights connecting vertices within $V_i$ are
    $\beta$, edge weights connecting vertices not in $V_i$ are $2\alpha +
    \beta$, edge weights connecting vertices not in $V_i$ to vertices in $V_i$
    are $\alpha + \beta$ unless shown differently in diagram. Optimal solutions
    were verified with our solvers.
  }
  \label{fig:lower_bound}
\end{figure}

\begin{figure}
  \centering
  \includegraphics[width=0.8\linewidth]{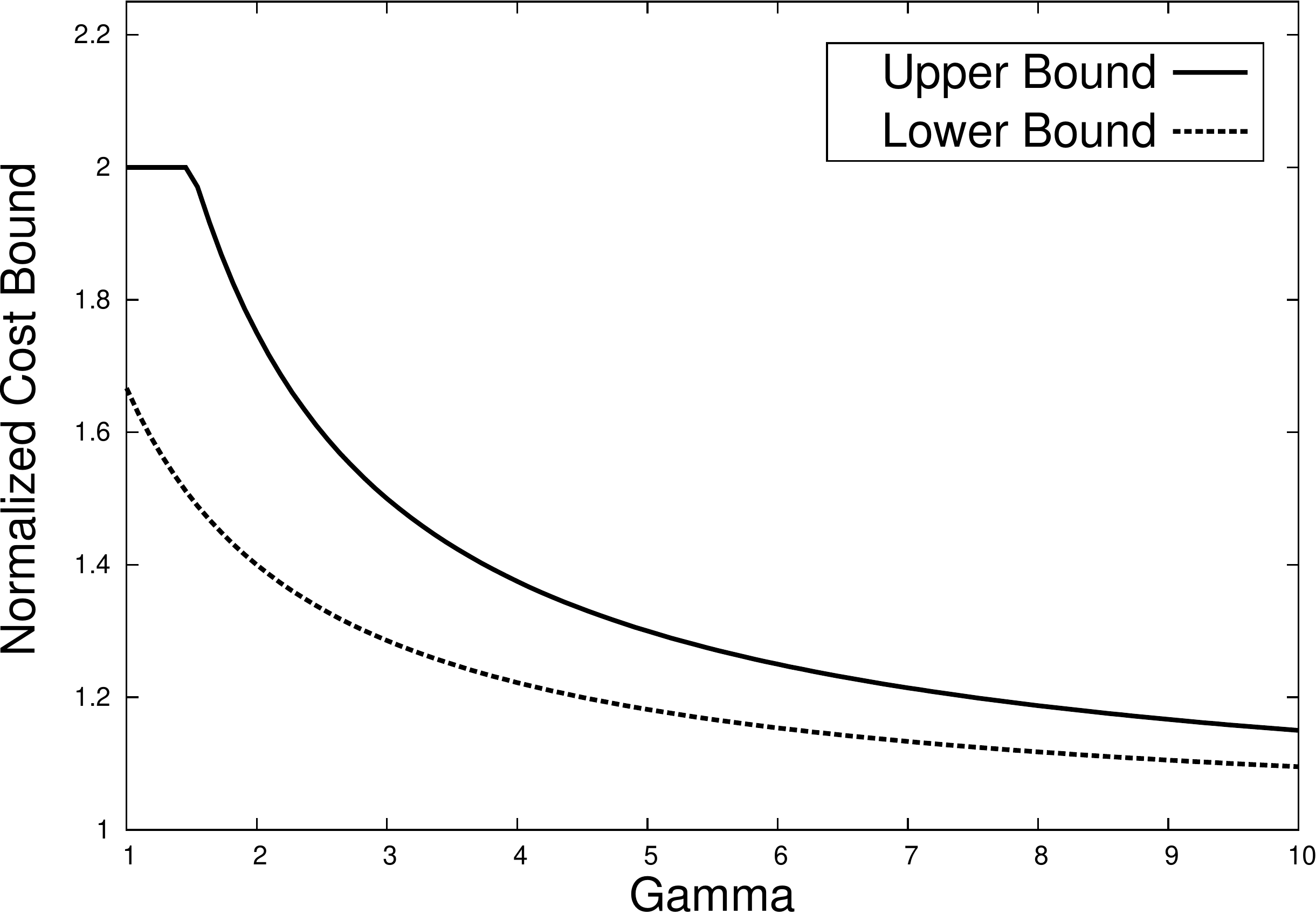}
  \caption{
    A plot showing the tightness of the approximation's upper bound. The gap
    between the two curves shows where the tightest upper bound can lie.
  }
  \label{fig:bound}
\end{figure}

%
\subsection{Finding \gammaCluster{s}}
Before we describe our method for finding optimal \gammaClustering(s), we describe a few properties of \gammaClustering{s}.

%
\subsubsection{Overlap}
A special property of \gammaClustering is that when $\Gamma > 1$, there are no overlapping clusters. Specifically there are no two clusters $V_i$ and $V_j$ with $\Gamma_i > 1$ and $\Gamma_j > 1$ that have a non-zero intersection unless one cluster is a subset of the other. 

\begin{lemma}
  \label{thrm:overlap}
  Given a graph $G$, clusters $V_i$ and $V_j$ with separations $\Gamma_i
  > 1$ and $\Gamma_j > 1$ (defined in Definition~\ref{def:gamma_cluster}), then
  $V_i \cap V_j$ is non-empty if and only if $V_i \subseteq V_j$ or $V_j
  \subseteq V_i$.
\end{lemma}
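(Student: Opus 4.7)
The plan is to prove the nontrivial ($\Rightarrow$) direction by contradiction, using the characterization of $\alpha$ and $\beta$ from Definition~\ref{def:gamma_cluster}. The reverse direction is immediate: if $V_i \subseteq V_j$ (or vice versa), then $V_i \cap V_j = V_i$, which is non-empty since clusters are non-empty by definition.

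For the forward direction, I would assume $V_i \cap V_j \neq \emptyset$ yet neither cluster is contained in the other, and extract three witness vertices: some $v \in V_i \cap V_j$, some $v_i \in V_i \setminus V_j$ (which exists since $V_i \not\subseteq V_j$), and some $v_j \in V_j \setminus V_i$ (which exists since $V_j \not\subseteq V_i$). The central observation is that the edge between $v_i$ and $v$ is intra-set for $V_i$ but inter-set for $V_j$, while the edge between $v_j$ and $v$ is intra-set for $V_j$ but inter-set for $V_i$.

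Applying the definitions of $\alpha$ and $\beta$ to each of these two edges gives the pair of inequalities
\[
  \alpha_j \;\leq\; w(v_i,v) \;\leq\; \beta_i
  \quad\text{and}\quad
  \alpha_i \;\leq\; w(v_j,v) \;\leq\; \beta_j.
\]
Multiplying these together and dividing by $\beta_i \beta_j$ yields $\Gamma_i \Gamma_j \leq 1$, which contradicts the hypothesis that $\Gamma_i > 1$ and $\Gamma_j > 1$.

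I do not expect any real obstacle here; the only subtle point to handle carefully is to invoke the $\min$ over both directions in the definition of $\alpha$ (so that an inter-set edge is bounded below by $\alpha$ regardless of orientation), and symmetrically to note that $\beta_i$ upper-bounds $w$ on any intra-cluster pair. Once the two inequalities above are in hand, the contradiction is a one-line multiplication.
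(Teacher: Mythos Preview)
Your proof is correct and follows essentially the same approach as the paper's: both argue by contradiction, exploiting an edge that is intra-set for one cluster but inter-set for the other to force an $\alpha \leq \beta$ inequality across clusters. The paper finishes by assuming without loss of generality that $\beta_j \leq \beta_i$ and deriving $\Gamma_i \leq 1$ from a single such edge, whereas you use two edges symmetrically to obtain $\Gamma_i\Gamma_j \leq 1$; the underlying idea is identical.
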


\begin{proof}
  We prove the above result by contradiction. Before we begin, recall that edges
  cut by the cluster $V_i$ have edge weights $\geq \alpha_i$ and edges within
  the cluster have edge weights $\leq \beta_i$. Let us assume that $V_i$ and
  $V_j$ overlap and are not nested. Without loss of generality let $\beta_j \leq
  \beta_i$. Then there exists an edge \tuple{v_a, v_b} with weight $w(v_a, v_b)
  \leq \beta_j$ for $v_a \in V_i \cap V_j$ and $v_b \in V_j \setminus (V_i \cap
  V_j)$. Since this edge does not exist in the cluster $V_i$ but it
  is cut by $V_i$, then it must be the case that $w(v_a, v_b) \geq \alpha_i$.
  However, this edge does exist in the cluster $V_j$ and so $\alpha_i \leq
  \beta_j$, since $w(v_a, v_b) \leq \beta_j$. This result highlights the
  contradiction: $\Gamma_i = \frac{\alpha_i}{\beta_i} \leq
  \frac{\beta_j}{\beta_i} \leq \frac{\beta_i}{\beta_i} = 1$.
\end{proof}

%
\subsubsection{Uniqueness}
The non-overlapping property for \gammaClustering{s} with $\Gamma > 1$ implies that there exists a unique maximal clustering set $C^*$ (more clusters equals more reductions in the search space size). This result follows from the simple property that if a cluster $V_i$ exists and does not exist in our clustering $C^*$, then we must be able to add it to $C^*$ to get additional search space reductions.

\begin{proposition}
  \label{thrm:superset}
  Given graph $G$ and a parameter $\Gamma > 1$, the problem of finding a
  \gammaClustering $C^*$ that maximizes the search space reduction has a unique
  solution $C^*$. Furthermore $C^*$ contains all clusters $V_i$ with separation
  $\Gamma_i \geq \Gamma$.
\end{proposition}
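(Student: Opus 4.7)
The plan is to identify $C^*$ explicitly as the family of every cluster with sufficient separation, verify via Lemma~\ref{thrm:overlap} that this collection is itself a valid $\Gamma$-Clustering, and then use monotonicity of the consecutive-visit constraints under inclusion to conclude both optimality and uniqueness.

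First, I would set
\[
  C^* \;\equiv\; \{V_i \subseteq V : \Gamma_i \geq \Gamma\} \;\cup\; \{\{v\} : v \in V\},
\]
where the singletons are appended only to secure the covering condition in Definition~\ref{def:gamma_clusters} (a singleton imposes no consecutive-visit restriction and is trivially admissible as a cluster). Since $\Gamma > 1$, Lemma~\ref{thrm:overlap} forces any two members of the first piece that intersect to be nested, while singletons are trivially nested in or disjoint from every other cluster. Hence $C^*$ is laminar, every cluster has $\Gamma_i \geq \Gamma$ by construction, and $V$ is covered, so all three clauses of Definition~\ref{def:gamma_clusters} are met and $C^*$ is a legitimate $\Gamma$-Clustering.

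Next, I would compare $C^*$ to an arbitrary $\Gamma$-Clustering $C$. Each $V_i \in C$ satisfies $\Gamma_i \geq \Gamma$ by definition and therefore lies in $C^*$, so $C \subseteq C^*$. Because adding a consecutive-visit constraint can only remove paths from the feasible set, the map from a clustering to its set of feasible paths is antitone under inclusion; applied to $C \subseteq C^*$, this yields that the feasible set under $C^*$ is contained in that under $C$, so $C^*$ achieves the maximum possible search-space reduction.

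For uniqueness, I would note that the same containment argument identifies $C^*$ as the unique inclusion-maximum element in the poset of $\Gamma$-Clusterings, and therefore as the unique clustering that contains every cluster with $\Gamma_i \geq \Gamma$. Any candidate $C \neq C^*$ must omit some such $V_i$, and so fails the ``contains every cluster with $\Gamma_i \geq \Gamma$'' property stated in the proposition; the monotonicity above then shows that $C^*$ strictly dominates (or at worst equals) $C$ in reduction while being the unique maximal choice. The only mildly delicate bookkeeping is the singleton/covering convention for isolated vertices, but once that is handled, laminarity from Lemma~\ref{thrm:overlap} and the antitone behavior of feasible sets combine to deliver both existence and uniqueness of the optimum.
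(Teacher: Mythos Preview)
Your argument is correct and rests on the same two ingredients as the paper---Lemma~\ref{thrm:overlap} and the monotonicity of the feasible set under adding consecutive-visit constraints---but you organize them constructively rather than by contradiction. The paper supposes two distinct optimal clusterings $C_1,C_2$, picks $V_1\in C_1\setminus C_2$, and invokes Lemma~\ref{thrm:overlap} to conclude that $V_1$ can be adjoined to $C_2$, contradicting optimality; you instead build $C^*$ explicitly as the family of all clusters with $\Gamma_i\geq\Gamma$ (padded with singletons for coverage), verify laminarity via the same lemma, and observe that every \gammaClustering sits inside $C^*$. Your route has the advantage of naming the optimum outright, which makes the ``furthermore'' clause immediate rather than a separate afterthought. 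Both proofs share the same soft spot: neither establishes that omitting a cluster \emph{strictly} enlarges the feasible set, so uniqueness of the reduction-maximizer (as opposed to uniqueness of the inclusion-maximal clustering) is asserted rather than fully nailed down---you at least flag this with your ``(or at worst equals)'' hedge, whereas the paper's contradiction step silently assumes strictness.
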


\begin{proof}
  We use contradiction to prove that $C^*$ is unique. Suppose there exists two
  different \gammaClustering{s} $C_1$ and $C_2$ that maximize the search space
  reductions for a given $\Gamma > 1$. Then, there is a cluster $V_1$ that is in
  $C_1$ and not in $C_2$ (or vice versa). This implies that either $V_1$ is
  somehow incompatible with $C_2$, which we know is not the case due to
  Lemma~\ref{thrm:overlap} (i.e., clusters do not overlap unless one is a subset
  of the other) or this cluster can be added to $C_2$. This is a contradiction,
  which proves the first result. The second result directly follows since adding
  a cluster can only further reduce the search space size of the problem.
\end{proof}

\begin{corollary}
  \label{thrm:gamma_gt_gamma}
  Given a graph $G$ and two clustering parameters $\Gamma_i > \Gamma_j > 1$,
  the optimal clustering $C^*_j$ for $\Gamma_j$ is a superset of any
  clustering for $\Gamma_i$.
\end{corollary}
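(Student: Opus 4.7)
The plan is to derive this corollary almost immediately from Proposition~\ref{thrm:superset}. The key observation is that the $\Gamma$-separation threshold is a monotone condition on individual clusters: if $\Gamma_k \geq \Gamma_i$ then certainly $\Gamma_k \geq \Gamma_j$ since $\Gamma_i > \Gamma_j$. So every cluster that qualifies as a $\Gamma_i$-cluster also qualifies as a $\Gamma_j$-cluster.

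More concretely, I would first take an arbitrary valid $\Gamma_i$-clustering $C_i$ and pick any cluster $V_k \in C_i$. By the definition of \gammaClustering (Definition~\ref{def:gamma_clusters}), its separation satisfies $\Gamma_k \geq \Gamma_i$, and because $\Gamma_i > \Gamma_j$, we also have $\Gamma_k \geq \Gamma_j$. Now apply the second part of Proposition~\ref{thrm:superset}, which (since $\Gamma_j > 1$) says that $C^*_j$ contains every cluster in $G$ whose separation is at least $\Gamma_j$. Thus $V_k \in C^*_j$, and since $V_k$ was arbitrary, $C_i \subseteq C^*_j$.

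I do not foresee a genuine obstacle here: the content of the corollary is just the monotonicity of the membership criterion combined with the ``contains all qualifying clusters'' characterization already established for the optimal clustering. The only subtlety worth a sentence in the write-up is verifying that Proposition~\ref{thrm:superset}'s hypothesis $\Gamma > 1$ is met for the smaller parameter $\Gamma_j$, which is given. No compatibility check between $C_i$ and $C^*_j$ (nesting vs.\ disjointness) is needed inside the proof, because that was already handled in Lemma~\ref{thrm:overlap} and absorbed into Proposition~\ref{thrm:superset}.
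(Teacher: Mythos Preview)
Your proposal is correct and matches the paper's own proof essentially line for line: both use the monotonicity of the separation threshold ($\Gamma_k \geq \Gamma_i > \Gamma_j$) together with the second part of Proposition~\ref{thrm:superset} to conclude that every cluster in $C_i$ belongs to $C^*_j$. The only cosmetic difference is that the paper phrases it as a proof by contradiction while you give the direct version.
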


\begin{proof}
  We prove this result by contradiction. Suppose we have a clustering $C_i$ for
  $\Gamma_i$ and the optimal clustering $C_j^*$ for $\Gamma_j$, for which there
  exists a cluster $V_1$ in $C_i$ that is not in $C_j^*$. Then by the definition
  of \gammaCluster{s} $V_1$ must satisfy $\Gamma_1 \geq \Gamma_i$ and since
  $\Gamma_i > \Gamma_j$ then $\Gamma_1 \geq \Gamma_j$, which makes $V_i$ a
  cluster that should be added to $C_j^*$. This contradicts
  Proposition~\ref{thrm:superset} and thus proves the result.
\end{proof}

\begin{corollary}
  There exists a minimum $\Gamma^* > 1$ and its optimal \gammaClustering $C^*$
  that is a superset of all other \gammaCluster{s} for $\Gamma > 1$.
\end{corollary}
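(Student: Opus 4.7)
The plan is to exploit the fact that $V$ is finite, so only finitely many distinct separation values $\Gamma_i$ can arise across all subsets $V_i \subseteq V$. This immediately gives a well-defined minimum over any nonempty set of achievable separations, which is what makes the existence of $\Gamma^*$ possible.

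First I would collect the set $S = \{\Gamma_i : V_i \subseteq V, \; \Gamma_i > 1\}$ of separation values of all candidate clusters whose separation strictly exceeds $1$. The set of subsets $V_i \subseteq V$ is finite, so $S$ is finite. If $S = \emptyset$ then no cluster with $\Gamma_i > 1$ exists and for every $\Gamma > 1$ the optimal \gammaClustering is empty, so the claim holds trivially for any $\Gamma^* > 1$. Otherwise, define $\Gamma^* = \min S$, which is well-defined precisely because $S$ is a nonempty finite subset of $(1,\infty)$, and note $\Gamma^* > 1$.

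Next I would apply Proposition~\ref{thrm:superset} with parameter $\Gamma^*$: the optimal \gammaClustering $C^*$ for $\Gamma^*$ exists, is unique, and contains \emph{every} cluster $V_i$ with $\Gamma_i \geq \Gamma^*$. By the choice of $\Gamma^*$ as the minimum of $S$, the condition $\Gamma_i \geq \Gamma^*$ is equivalent to $\Gamma_i > 1$, so $C^*$ contains every cluster with separation strictly greater than $1$.

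Finally I would verify the superset property. Take any $\Gamma > 1$ and any \gammaClustering $C$ for that parameter. Every $V_i \in C$ satisfies $\Gamma_i \geq \Gamma > 1$, hence $\Gamma_i \in S$, hence $\Gamma_i \geq \Gamma^*$, so $V_i$ is one of the clusters collected into $C^*$ by Proposition~\ref{thrm:superset}. Thus $C \subseteq C^*$, which is the desired superset property. (One can equivalently invoke Corollary~\ref{thrm:gamma_gt_gamma} directly in the case $\Gamma > \Gamma^*$, and handle the remaining range $1 < \Gamma \leq \Gamma^*$ by observing that no new cluster enters as $\Gamma$ decreases past $\Gamma^*$ until the next smaller element of $S$, which does not exist below $\Gamma^*$.) There is no serious obstacle here: the whole argument hinges on the finiteness of $2^V$ and on the already-proven fact that the optimal clustering is characterized exactly by the threshold $\Gamma$.
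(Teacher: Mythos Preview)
Your proof is correct and rests on the same finiteness idea as the paper's, but you and the paper choose $\Gamma^*$ differently. The paper sets $\Gamma^*$ to be the smallest ratio of two edge weights in $G$ exceeding $1$, arguing that every $\Gamma_i=\alpha_i/\beta_i$ is such a ratio and hence $\Gamma_i>1$ forces $\Gamma_i\ge\Gamma^*$; it then invokes Corollary~\ref{thrm:gamma_gt_gamma}. You instead take $\Gamma^*=\min S$ over the actually realized cluster separations and appeal directly to Proposition~\ref{thrm:superset}. Your $\Gamma^*$ is in general at least as large as the paper's (since not every edge-weight ratio is realized as some $\alpha_i/\beta_i$), but either choice works for the conclusion. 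Your route is marginally more direct, since it skips the intermediate observation about edge-weight ratios; the paper's route has the small advantage that its $\Gamma^*$ is defined purely in terms of the graph's edge set without enumerating subsets.
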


\begin{proof}
  This result follows directly from Corollary~\ref{thrm:gamma_gt_gamma}, when we
  consider $\Gamma^*$ to be the smallest ratio of edge weights in the graph
  $G$ (find the two edges that give us the smallest ratio). Then for every
  cluster with a separation parameters $\Gamma_i > 1$, it must also be greater
  than or equal to $\Gamma^*$ since $\Gamma_i$ itself is a ratio of existing edge
  weights in the graph $G$. Thus by Corollary~\ref{thrm:gamma_gt_gamma}, $C^*$
  must be superset of any other \gammaClustering for some $\Gamma > 1$.
\end{proof}

%
\subsubsection{An \mst Approach For Finding \gammaCluster{s}}
Given an input $\Gamma > 1$, Algorithm~\ref{alg:clustering} computes the optimal \gammaClustering{s}, i.e., the \gammaClustering with maximum search space reduction. Informally the algorithm deletes edges in the graph from largest to smallest (line 6-7) to look for \gammaCluster{s}. It uses a minimum spanning tree (\mst) to keep track of when the graph becomes disconnected, and when it does the disconnected components are tested to see if they qualify as \gammaCluster{s} (line 9-11). Regardless, any non-trivial sized disconnected component (\gammaCluster or not) is added back to the queue (line 13-14), so that it can be broken and tested again to find of all nested \gammaCluster{s}.

\begin{algorithm}
  \caption{\textsc{\gammaClustering}$(G,\Gamma)$}
  \label{alg:clustering}
  \texttt{\bf assert($\Gamma > 1$)} \\
  $C \leftarrow \{\}$ \\
  $M \leftarrow \{\mst(G)\}$ \\
  \While{$|M| > 0$}{
    $m \leftarrow M.pop()$ \\
    $\alpha \leftarrow $ largest edge cost in $m$ \\
    $M' \leftarrow $ disconnected trees after removing edge(s) of cost $\alpha$ from $m$ \\
    \For{$m' \in M'$}{
      $G' \leftarrow $ graph induced by $V[m']$ \\
      $\beta \leftarrow $ max edge cost of $G'$ \\
      \If{$G'$ is a clique and $\Gamma' \equiv \alpha/\beta \geq \Gamma$}{
        $C \leftarrow C \cup \{V[m']\}$ \\
      }
      \If{$|m'| > 1$} {
        $M \leftarrow M \cup m'$ \\
      }
    }  
  }
  \Return $C$
\end{algorithm}

\begin{theorem}
  \label{thrm:clustering}
  Given $G$ and a $\Gamma > 1$, Algorithm~\ref{alg:clustering} finds the optimal
  \gammaClustering $C^*$ in $O(|V|^3)$ time.
\end{theorem}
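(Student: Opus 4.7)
The plan is to prove (i) correctness --- that $C$ returned by Algorithm~\ref{alg:clustering} equals the optimal \gammaClustering $C^*$ --- and (ii) that the algorithm runs in $O(|V|^3)$ time. I will further split correctness into \emph{soundness} (every $V[m']$ appended to $C$ satisfies $\Gamma_{V[m']} \geq \Gamma$) and \emph{completeness} (every cluster $V_i$ with $\Gamma_i \geq \Gamma$ is found), since by Proposition~\ref{thrm:superset}, $C^*$ is exactly the collection of all $V_i$ with $\Gamma_i \geq \Gamma$.

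Soundness reduces to identifying the algorithm's $\alpha$ and $\beta$ with $\alpha_{V[m']}$ and $\beta_{V[m']}$ from Definition~\ref{def:gamma_cluster}. The $\beta$ identity is direct, as $\beta$ is computed as the max weight edge in $G[V[m']]$. For $\alpha$, note that since edges are removed in decreasing order of weight, every edge of $\mst(G)$ across the cut $(V[m'], V \setminus V[m'])$ has weight $\geq \alpha$. A standard exchange argument using the MST cut property then rules out any edge of $G$ across this cut with weight strictly less than $\alpha$, giving $\alpha_{V[m']} = \alpha$.

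For completeness, fix $V_i$ with $\Gamma_i \geq \Gamma > 1$ and let $T = \mst(G)$. Lemma~\ref{thrm:mst_edges} gives a unique edge $e_i$ of $T$ crossing $(V_i, V \setminus V_i)$, and the cut property forces $w(e_i) = \alpha_i$; further, $T[V_i]$ is a spanning subtree of $V_i$ with all weights $\leq \beta_i < \alpha_i$. I then follow the sequence of subtrees popped by the algorithm that still contain $V_i$. At each such iteration, either the removed max-weight edges lie outside $V_i$ (the subtree containing $V_i$ strictly shrinks in its edge count above $\alpha_i$) or $e_i$ itself is removed (splitting off $V_i$ exactly as one component $V[m']$, since $T[V_i]$ has no edge of weight $\alpha_i$). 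Finite descent guarantees $e_i$ is eventually removed; at that iteration $\alpha = \alpha_i$ and $\beta = \beta_i$, so the test $\alpha/\beta \geq \Gamma$ passes and $V_i$ enters $C$.

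For the runtime, Prim's algorithm on the complete graph computes $\mst(G)$ in $O(|V|^2)$. The while loop corresponds to a recursion tree whose leaves are singleton vertices and whose internal nodes each have at least two children, giving $O(|V|)$ iterations. Per iteration, scanning $m$ for its max edges and its post-removal components is $O(|V|)$, and for the child components, building each $G'$ and computing $\beta$ costs $O(|V[m']|^2)$, summing to $O(|V|^2)$ via the inequality $\sum_j |V[m'_j]|^2 \leq \bigl(\sum_j |V[m'_j]|\bigr)^2 \leq |V|^2$. Multiplying yields $O(|V|^3)$. The main obstacle is the completeness argument: one must confirm that when $e_i$ is removed, $V_i$ appears \emph{exactly} as a component (neither enlarged by extraneous vertices nor fragmented), which is precisely what Lemma~\ref{thrm:mst_edges} and the strict gap $\beta_i < \alpha_i$ deliver.
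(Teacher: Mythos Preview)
Your proof is correct and follows essentially the same approach as the paper: both argue soundness via the MST cut property (identifying the algorithm's $\alpha$ with $\alpha_{V[m']}$), completeness by showing every valid cluster is eventually isolated as a component, and the $O(|V|^3)$ runtime via $O(|V|)$ iterations at $O(|V|^2)$ each, all glued together by Proposition~\ref{thrm:superset}. Your completeness argument is actually more explicit than the paper's---you invoke Lemma~\ref{thrm:mst_edges} to pin down the unique crossing edge $e_i$ and track the descending chain of subtrees until $e_i$ becomes the max edge, whereas the paper argues more informally that ``every edge in the original MST is eventually removed''; both are driving at the same fact.
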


\begin{proof}
  The proof is two fold, first we show that Algorithm~\ref{alg:clustering} runs
  in polynomial time and second we show that it finds the optimal
  \gammaClustering. For this proof let $n = |V|$.

  Minimum spanning trees can be found $O(n^2)$ time~\cite{prim1957shortest}
  (line 3). The rest of the algorithm modifies the \mst from line 3, which
  originally has $n$ edges and so the while loop for the rest of the algorithm
  can only run at most $n$ times (we can't remove more than $n$ edges). Finding
  the largest edge(s) and removing them in line 6 and 7 takes $O(n)$ time.
  Creating the induced subgraph and finding the maximum edge cost in lines 9 and
  10 takes $O(n^2)$ time. Testing if the subgraph is a clique (line 11) takes
  $O(n^2)$ time. Thus lines 1 to 3 run in $O(n^2)$ time and lines 4 to 14 run in
  $O(n\cdot n^2)$, which means the entire algorithm runs in $O(n^3)$ time or
  $O(|V|^3)$.
  
  Next we show that Algorithm~\ref{alg:clustering} finds the optimal
  \gammaClustering. To do so, we will show that $V[m']$ does indeed represent a
  \gammaClustering with separation $\Gamma' \equiv \frac{\alpha}{\beta}$ (line
  11) as defined by Definition~\ref{def:gamma_cluster} and then we  finish
  by showing that the algorithm does not omit any candidate clusters, thus
  proving the theorem by leveraging Proposition~\ref{thrm:superset}.

  Let us start by understanding how to find \gammaCluster{s} and how we use
  \mst{s} in the algorithm. A \gammaCluster with separation $\Gamma_i > 1$ is a
  subgraph of $G$ that is connected with intra-edge weights less than the
  inter-edge weights connected to the rest of the graph. Thus one method of
  searching for \gammaCluster{s} is to delete all edges in the graph of weight
  $\geq \alpha$ and if there is a disconnected subgraph in $G$ then and only
  then is it a possible \gammaCluster. We can use \mst{s} to more efficiently
  keep track of these deleted edges. By definition, an \mst is tree that connects
  vertices within the graph with minimum edge weight. If we remove edges
  of size $\geq \alpha$ in the graph to search for disconnected sub-graphs, then
  the graph is disconnected if and only if the \mst is disconnected. This
  follows by considering the cut needed to disconnect these two sub-graphs (the
  minimum weight edge cut, shares the same weight cut by the \mst). Thus we can
  instead search for \gammaCluster{s} by disconnecting the \mst. Furthermore if
  we disconnected the \mst by incrementally removing the largest edge(s) of
  weight $\alpha$ from the tree then we know that induced sub-graphs of the
  newly disconnected trees have at least one inter-set edge of size $\alpha$.
  Line 9 in the Algorithm creates the induced subgraph $G'$ of a disconnected
  tree ($m' \in M'$), line 10 measures its $\beta$, and if $G'$ is a clique and
  meets the separation criterion ($\Gamma' \geq \Gamma$) then $V[m']$ is indeed
  a \gammaCluster (by definition), thus it is added to the clustering $C$ in
  line 12. 
  
  Next we show that the Algorithm did not omit any candidate clusters. We have
  already argued that only disconnected \mst trees need to be considered for
  \gammaCluster{s}, thus what is left to show is that the algorithm tests every
  possible value of $\alpha$ to disconnect the \mst tree.  This is true since it
  considers every edge in the original \mst. This is because every disconnected
  tree of size two or more is added back to $M$ in line 14 until every edge
  originally in the \mst is removed in line 7. 

  Therefore this algorithm finds all of the candidate \gammaCluster{s} and
  Proposition~\ref{thrm:superset} tells us that it is the unique optimal
  \gammaClustering for the given $\Gamma > 1$.
\end{proof}

%
\subsection{Search Space Reduction}
The last remaining question is to determine how much the clustering approach reduces the search space. In general, this is difficult to answer since it depends on the particular constraints of the path planning problem.  However, to get an understanding of the search space reduction, consider the example of \tsp with a non-nested clustering (nesting would result in further search space reductions). Let $r$ be the ratio of the non-clustered search space size $N_0$, to the clustered search space size $N_1$, for a graph with vertices $V$ and a clustering $C = \{ V_1, V_2, \dots, V_m \}$. Then, the ratio (derived from counting the number of solutions) is as follows:
\[ 
r \equiv \frac{N_0}{N_1} = \frac{|V|!}{m! \prod_{i}^{m} |V_i|!} 
\]
To further simplify the ratio consider the case where all clusters are equally sized (for all $V_i, V_j \in C$, $|V_i| = |V_j|$):

\begin{proposition}[Search Space Reduction]
  Given a graph $G$ and a clustering $C = \{V_1, V_2, \ldots, V_m\}$ such that
  $|V_i| = |V_j|$ for all $i,j \in [1,m]$ then the ratio $r$ of the search
  space size for the original \tsp problem to the cluster \tsp problem is
  \footnote{The big-$\Omega$ notation states that for large enough
  $|V|$ the ratio $r$ is at least $k \dot (m!)^{x-1}$ for some constant $k$.}
  \[ r = \Omega\left( (m!)^{x-1} \right), \]
  where $x = |V|/m$.
\end{proposition}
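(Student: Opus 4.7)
My plan is to start from the exact ratio formula stated just above the proposition, $r = \frac{|V|!}{m!\prod_i |V_i|!}$, and specialize to equal cluster sizes by setting $|V_i| = x$ and $|V| = mx$. This collapses the ratio to
\[ r = \frac{(mx)!}{m!\,(x!)^m}. \]
The target bound is $r = \Omega\bigl((m!)^{x-1}\bigr)$, so up to a constant it suffices to show $(mx)!/(x!)^m \geq (m!)^x$.

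The key step will be to rewrite the multinomial coefficient $(mx)!/(x!)^m$ by the telescoping identity
\[ \frac{(mx)!}{(x!)^m} = \prod_{k=1}^{m} \binom{kx}{x}, \]
which corresponds to filling $m$ blocks of size $x$ one block at a time. After this factorization, I would apply the elementary lower bound $\binom{kx}{x} \geq k^x$, which can be verified by writing $\binom{kx}{x} = \prod_{i=0}^{x-1} (kx - i)/(x - i)$ and checking that each factor is at least $k$ (the inequality $(kx-i)/(x-i) \geq k$ reduces to $(k-1)i \geq 0$).

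Multiplying these lower bounds over $k = 1, \ldots, m$ gives $\prod_{k=1}^{m} \binom{kx}{x} \geq \prod_{k=1}^{m} k^x = (m!)^x$, and dividing by $m!$ yields $r \geq (m!)^{x-1}$, which is $\Omega\bigl((m!)^{x-1}\bigr)$ with hidden constant $1$.

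I do not foresee a substantive obstacle: the proof is essentially the multinomial factorization plus one standard binomial inequality. The only things requiring care are applying the inequality in the right direction and noting that the constant in the $\Omega$ is literally $1$, so the bound in fact holds for every valid $|V|$ rather than only asymptotically in $|V|$.
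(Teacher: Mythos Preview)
Your proposal is correct and is essentially the same argument as the paper's: both establish $(mx)! \geq (m!)^x (x!)^m$ by grouping the product into $m$ blocks and using the elementwise inequality $(kx-j)/(x-j) \geq k$, which the paper writes as $kx - j = k(x - j/k) \geq k(x - j)$ and you phrase via the binomial bound $\binom{kx}{x} \geq k^x$. The telescoping into binomial coefficients is a clean repackaging, but the underlying factorization and inequality are identical, and both yield $r \geq (m!)^{x-1}$ with constant $1$.
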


\begin{proof}
  The number of solutions for the (directed) \tsp problem is $N_0 = |V|!$ and
  the number of solutions for the \nestedCtsp problem is: $N_1 = m! \left(
  \frac{|V|}{m} ! \right)^m$ (these results come from counting the number of
  possible solutions). First let us bound $|V|! = (mx)!$:
  \begingroup
  \allowdisplaybreaks
  \begin{align*}
    |V|!  &=    \prod_{i=m}^{1} \prod_{j=0}^{x-1} \left( ix - j \right) \\
          &=    \prod_{i=m}^{1} \prod_{j=0}^{x-1} (i) \left( x - \frac{j}{i} \right) \\
          &\geq \prod_{i=m}^{1} \prod_{j=0}^{x-1} (i) \left( x - j \right) \\
          &=    \left( \prod_{i=m}^{1} i^x \right) \left( \prod_{j=0}^{x-1} ( x - j ) \right)^m \\
          &=    (m!)^x(x!)^m
  \end{align*}
  \endgroup
  
  We now use the fact that $|V|! \geq (m!)^x(x!)^m$ to prove the main result: 
  \begin{align*}
    r & \equiv  \frac{N_0}{N_1}  =       \frac{|V|!}{m!\left(x!\right)^m}  \geq    \frac{\left(m!\right)^x \left(x!\right)^m}{m! \left(x!\right)^m}  =       (m!)^{x-1} \\
  \end{align*}
  Thus $r = \Omega\left( (m!)^{x-1} \right)$.
\end{proof}

To get an idea of the magnitude of $r$, consider an instance of size $|V|=100$ divided into four equal clusters. The clustered problem has a feasible solution space of size $N_1 \approx  1.49 \times 10^{-56} N_0$, where $N_0$ is the feasible solution space size of the non-clustered problem. However, $N_1$ is still extremely large at about $1.39 \times 10^{102}$.

%
\section{\gammaClustering Experiments}
\label{sec:experiments}
In this section we present experimental results that demonstrate the effectiveness of \gammaClustering for solving discrete path planning problems. We focus on metric \tsp instances drawn from the established \tsp library \tsplib~\cite{reinelt1991tsplib}, which have a variety of \tsp problem types (the first portion of the instance name indicates the type and the number  indicates the size). The tests were conducted with $\Gamma = 1.000001$, for which Theorem~\ref{thrm:approximation_factor} implies that the solution to the clustered problem gives a $\min\left(2, 1 + \frac{3}{2\Gamma}\right)$-factor approximation to the \tsp instance. However, we will see that the observed gap in performance is considerably smaller.

To test the effectiveness of the clustering method, we perform \gammaClustering on each \tsp instance and recorded both the runtime and the number of clusters found. This gives us an idea of whether or not instances from \tsplib have a structure that can be exploited by \gammaClustering. Then, we use standard integer programming formulations for both the original \tsp instance and the general clustered version of the instance (\nestedCtsp). We solve each instance three times using the solver Gurobi~\cite{optimization2012gurobi} and recorded the average solver time and solution quality. All instances were given a time budget of 900 seconds, after which they were terminated and the best solution found in that run was outputted. The \tsp instances reported in this paper are the 37 instances that were solved to within 50\% of optimal and have \gammaCluster{s}. The remaining instances are not reported as they would have required more than 900 seconds to provide a meaningful comparison.

Additionally we demonstrate \gammaClustering on an office environment to gain insight into the structure of \gammaCluster{s}.

%
\subsection{Integer Programming Formulation}
The clustering algorithm was implemented in Python and run on an Intel Core i7-6700, 3.40GHz with 16GB of RAM. The integer programming (IP) expression of the \tsp problem and clustered problem is solved on the same computer with Gurobi, also accessed through python. The results of both of these approaches is found in Table~\ref{table:results} and summarized in Figure~\ref{fig:results}.

The following is the IP expression used for the clustered and non-clustered path planning problems, where each variable $e_{a,b} \in \{0,1\}$:
\setcounter{equation}{0}
\begin{align}
  \text{minimize} \nonumber \\
    & \sum_{a=1}^{|V|} \displaystyle\sum_{b=1}^{|V|} e_{a,b}\ w(v_a, v_b) \\
  \text{subject to} \nonumber \\
    & \sum_{b=1}^{|V|} e_{a,b} = 1, 
      \quad\text{ for each } a \in \{ 1,2,\ldots, |V| \}
      \label{eq:v_outgoing} \\
    & \displaystyle\sum_{a=1}^{|V|} e_{a,b} = 1,
      \quad\text{ for each } b \in \{ 1,2,\dots, |V| \}
      \label{eq:v_incoming} \\
    & \hspace*{-4ex}
      \displaystyle\sum_{\forall v_a \in V_i, v_b \not\in V_i}
        \hspace*{-4ex}
        e_{a,b} = 1,
      \quad\text{ for each } i \in \{ 1,2,\ldots, m \}
      \label{eq:set_outgoing} \\
    & \hspace*{-4ex}
      \displaystyle\sum_{\forall v_a \not\in V_i, v_b \in V_i}
        \hspace*{-4ex}
        e_{a,b} = 1,
      \quad\text{ for each } i \in \{ 1,2,\ldots, m \}
      \label{eq:set_incoming} \\
    & \hspace*{-1ex}
      \displaystyle\sum_{e_{a,b} \in E'}
        \hspace*{-1ex}
        e_{a,b} \leq |E'|-1,
      \quad\text{ for each subtour } E'
      \label{eq:sub_tour_elim}
\end{align}
The formulation was adapted from a \tsp IP formulation found in~\cite{gouveia1999asymmetric}, where the Boolean variables $e_{a,b}$ represent the inclusion/exclusion of the edge $\tuple{v_a, v_b}$ from the solution.

Constraints~\ref{eq:v_outgoing} and~\ref{eq:v_incoming} restricts the incoming and outgoing degree of each vertex to be exactly one (the vertex is visited exactly once). Similarly constraints ~\ref{eq:set_outgoing} and ~\ref{eq:set_incoming} restrict the incoming and outgoing degree of each cluster to be exactly one (these constraints are only present in the clustered version of the problem). Constraint~\ref{eq:sub_tour_elim} is the subtour elimination constraint, which is lazily added to the formulation as conflicts occur due to the exponential number of these constraints. For each instance, we seed the solver with a random initial feasible solution.

%
\subsection{Results}
Figure~\ref{fig:time} shows the ratio of time spent finding clusters with respect to total solver time. In all instances this time is less than 6\% and most is less than 1\%. Additionally as the total solver time grows, the ratio gets smaller (Time02 is for instances that use the full 900 seconds). This approach is able to find \gammaCluster{s} on 63 out of 70 \tsp instances. In total it found 3700 non-trivial \gammaCluster{s} (i.e., clusters with $|V_i| \geq 2$), which is promising since the \tsplib library contains a variety of different \tsp applications.

Figure~\ref{fig:results} and Table~\ref{table:results}, shows that for instances that do not time out the solution path costs found by the clustering approach are close to optimal (Cost01 is close to 1 in the figure and the instances from burma14 to pr107 are almost all within 1\% error as shown in the table). Furthermore when the solver starts to time out (exceeds its 900 second time budget) the solution quality of the \gammaClustering approach starts to surpass the solution quality of the non-clustered approach (as shown by Cost02 in the figure). We attribute this trend to the fact that the clustered approach needs less time to search its feasible solution space and thus is able to find better quality solutions faster than its counterpart. In instances gr202, kroB200, pr107, tsp225, and gr229 the clustering approach does very well compared to the non-clustering approach, enabling the solver to find solutions within 1\% of optimal for the first three instances and solutions within 6\% of optimal for the latter two instances, while the non-clustered approach exceeds 8\% of optimal in the first three instances and 28\% of optimal for the latter two instances.

On average the \gammaClustering approach is more efficient than the non-clustered approach, which is highlighted in Figure~\ref{fig:results} and Table~\ref{table:results}. For the results that do not time out (Time01 in the figure) we often save more than 50\% of the computational time, while maintaining a near optimal solution quality (Cost01 in the figure). For instances that require most or all of the 900 seconds (harder instances) the time savings can be quite large. This is particularly clear from the table when we compare the easy instances burma14 up to st70 that have an average time savings of around 60\% to the harder instances kroA100, gr96, bier127, and ch130, which all have a time savings of more than 95\%. For the instances that both time out (Time02 in the figure), the time savings is not present since both solvers use the full 900 seconds.

From these results we can see that when the \gammaClustering approach does not time out, we usually save time and when the approach does time out we often find better quality solutions as compared to the non-clustered approach.

\begin{remark}[Discussion on using \gammaClustering for \tsp]
  It is worth emphasizing that we are not necessarily recommending solving \tsp
  instances in this manner. We are simply using \tsp as an illustrative example
  to show how \gammaClustering can be used to reduce computation time in a given
  solver. Many discrete path planning problems are solved with IP solvers and as
  such we hope our results help provide some insight as to how \gammaClustering
  would work on other path planning problems. In general unless the solver
  approach (IP or not) takes advantage of the clustering, there is no
  guarantee that a computation saving will be achieved.
\end{remark}

%
\subsection{Clustering Real-World Environments}
As shown in Figure~\ref{fig:example} we have also performed \gammaClustering on real-world environments.  The figure shows the floor-plan for a portion of one floor of the Engineering~5 building at the University of Waterloo.  Red dots denote the locations of desks within the environment. We encoded the environment with a graph, where there is a vertex for each red dot and edge weights between vertices are given by the shortest axis-aligned obstacle free path between the locations (obstacle-free Manhattan distances). The figure shows the results of clustering for $\Gamma = 1.000001$. We see that locations that are close together are formed into clusters unless there are other vertices in close proximity.

A path planning problem on this environment could be robotic mail delivery, where a subset of locations must be visited each day. The clusters could then be visited together (or visited using the same robot).

\begin{figure}
  \centering
  \includegraphics[height=120pt]{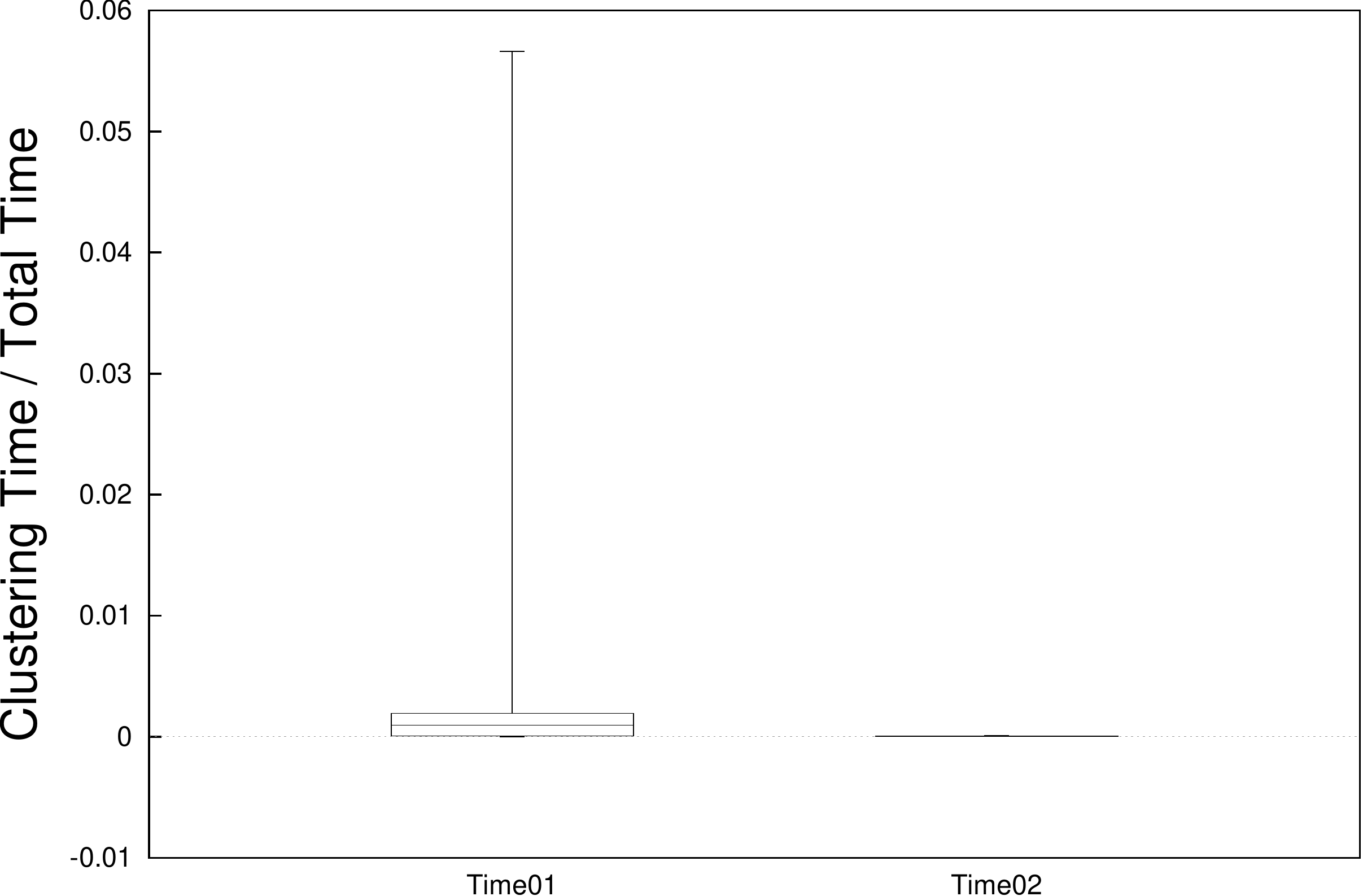}
  \caption{
    Box plot of the clustering time ratio with respect to the \gammaClustering
    approach. The data is categorized by instances that did not time out
    (Time01) and instances that did time out (Time02).
  }
  \label{fig:time}
\end{figure}

\begin{figure}
  \centering
  \includegraphics[height=120pt]{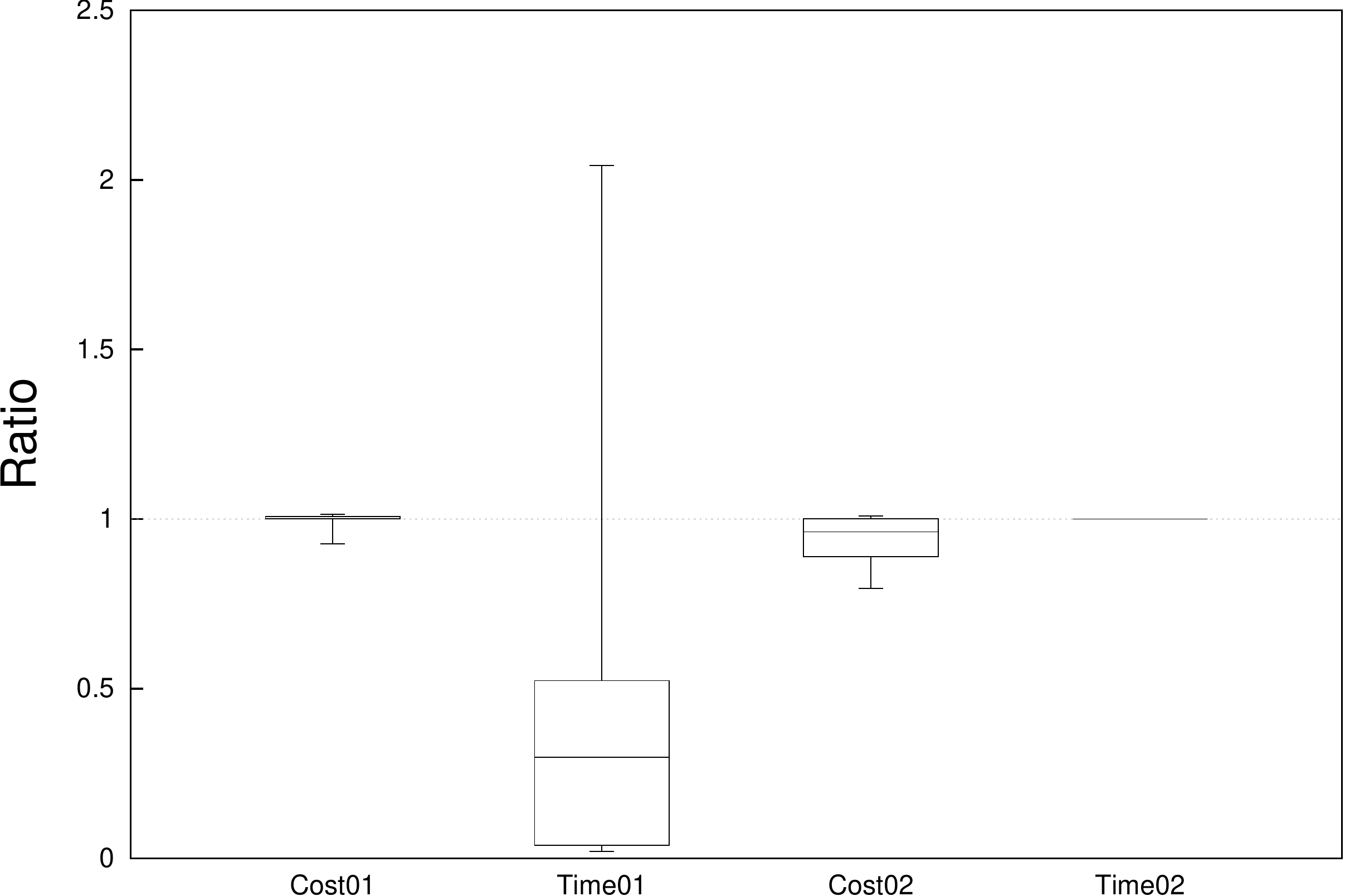}
  \caption{
    Box plot of the solver cost and time ratios for the \gammaClustering
    approach with respect to the \tsp approach. The data is categorized by
    instances that did not time out (Cost01 and Time01) and instances that did
    time out (Cost02 and Time02).
  }
  \label{fig:results}
\end{figure}

\begin{table}
	\centering
	\scriptsize
	\begin{tabular}{@{}lc rr rr}
	                    & 
	                    & \multicolumn{2}{c}{\tsp} 
	                    & \multicolumn{2}{c}{\nestedCtsp} \\
	                    & $|C|$
                      & \% Error & Time ($t$)
                      & \% Error & Time ($t$) \\
    \midrule
    burma14 & 5 & 0.00 & $<$1 & 0.39 & {$<$1} \\ 
    ulysses16 & 6 & 0.00 & $<$1 & 0.73 & {$<$1} \\ 
    berlin52 & 17 & 0.00 & $<$1 & 0.06 & {$<$1} \\ 
    swiss42 & 16 & 0.00 & $<$1 & 0.94 & {$<$1} \\ 
    eil51 & 11 & 0.00 & 1 & 0.00 & {\bf $<$1} \\ 
    eil76 & 13 & 0.00 & 3 & 0.00 & {\bf 1} \\ 
    rat99 & 28 & 0.00 & 6 & 0.83 & 13 \\ 
    eil101 & 16 & 0.00 & 8 & 0.00 & {\bf 4} \\ 
    ulysses22 & 10 & 0.00 & 8 & 0.00 & {\bf $<$1} \\ 
    pr76 & 27 & 0.00 & 40 & 1.40 & {\bf 11} \\ 
    st70 & 23 & 0.00 & 45 & 0.44 & {\bf 13} \\ 
    lin105 & 42 & 0.00 & 225 & 0.00 & {\bf 8} \\ 
    kroE100 & 43 & 0.00 & 414 & 0.39 & {\bf 20} \\ 
    kroC100 & 46 & 0.00 & 445 & 0.00 & {\bf 12} \\ 
    kroA100 & 44 & 0.00 & 602 & 1.11 & {\bf 23} \\ 
    gr96 & 32 & 0.00 & 845 & 0.05 & {\bf 38} \\ 
    bier127 & 37 & 0.00 & 900 & 0.23 & {\bf 24} \\ 
    gr137 & 44 & 0.00 & 900 & 0.00 & {\bf 154} \\ 
    kroD100 & 42 & 0.00 & 900 & 0.57 & {\bf 471} \\ 
    kroB100 & 42 & 0.01 & 900 & 0.96 & 900 \\ 
    ch130 & 59 & 0.20 & 900 & 0.90 & {\bf 29} \\ 
    ch150 & 53 & 0.22 & 900 & 0.31 & 900 \\ 
    kroB150 & 65 & 0.35 & 900 & {0.35} & {\bf 395} \\ 
    kroA150 & 58 & 1.37 & 900 & {\bf 0.15} & {\bf 492} \\ 
    rat195 & 57 & 1.75 & 900 & {\bf 0.89} & 900 \\ 
    kroA200 & 82 & 5.59 & 900 & {\bf 1.38} & 900 \\ 
    pr124 & 18 & 5.78 & 900 & {\bf 4.24} & 900 \\ 
    gr202 & 74 & 8.56 & 900 & {\bf 0.64} & {\bf 703} \\ 
    pr136 & 48 & 9.78 & 900 & {\bf 4.88} & 900 \\ 
    kroB200 & 80 & 10.16 & 900 & {\bf 0.67} & 900 \\ 
    pr107 & 6 & 13.01 & 900 & {\bf 0.40} & 900 \\ 
    a280 & 11 & 20.11 & 900 & 20.46 & 900 \\ 
    pr144 & 40 & 24.40 & 900 & {\bf 20.03} & 900 \\ 
    tsp225 & 66 & 28.50 & 900 & {\bf 5.76} & 900 \\ 
    gr229 & 73 & 28.63 & 900 & {\bf 2.24} & 900 \\ 
    pr152 & 44 & 40.64 & 900 & 41.15 & 900 \\ 
    gil262 & 98 & 44.81 & 900 & {\bf 21.00} & 900 \\ 
    \bottomrule
	\end{tabular}
	\caption{
	  Experimental results for \tsplib instances: $|C|$ reports the number of
	  \gammaCluster{s}, \% error reports the average error from optimal and time
	  column reports the average solver time. The instances where the \nestedCtsp
	  approach outperforms the \tsp approach are highlighted in bold. Results are
	  sorted from least to most difficult for the non-clustered approach.
	}
	\label{table:results}
\end{table}

%
\section{Conclusion}
\label{sec:conclusion}
In this paper we presented a new clustering approach called \gammaClustering. We have shown how it can be used to approximate the discrete path planning problems to within a constant factor of $\min\left(2, 1 + \frac{3}{2\Gamma}\right)$, more efficiently than solving the original problem. We verify these findings with a set of experiments that show on average a time savings and a solution quality that is closer to the optimal solution than it is to the bound. For future directions we will be investigating other path planning applications, which includes online path planning with dynamic environments.


\end{document}